\newcommand{\Var}[1]{\operatorname{Var}\left[#1\right]}
\newcommand{\E}[1]{\operatorname{\mathbb{E}}\left[#1\right]}
\newcommand{\Proba}[1]{\operatorname{\mathbb{P}}\left[#1\right]}
\newcommand{\Cov}[1]{\operatorname{Cov}\left[#1\right]}
\newcommand{\emptyline}{\raisebox{0.8ex}{\rule{24pt}{0.4pt}}}
\newcommand{\iu}{{i\mkern1mu}}
\def\tsc#1{\csdef{#1}{\textsc{\lowercase{#1}}\xspace}}
\newtheorem{theorem}{Theorem}
\newproof{proof}{Proof}
\begin{document}
\let\WriteBookmarks\relax
\def\floatpagepagefraction{1}
\def\textpagefraction{.001}
\shorttitle{1-Lipschitz Network Initialization}
\shortauthors{M.F.R. Juston et~al.}

\title [mode = title]{1-Lipschitz Network Initialization for Certifiably Robust Classification Applications: A Decay Problem}                      



\author[1]{Marius {F.~R. Juston}}[type=editor,
                        auid=000,bioid=1,
                        role=Researcher,
                        orcid=0009-0005-7216-4180]
\cormark[1]
\ead{mjuston2@illinois.edu}

\credit{Writing – review \& editing, Writing – original draft, Methodology, Formal analysis, Conceptualization, Software}

\affiliation[1]{organization={The Grainger College of Engineering, Industrial and Enterprise Systems Engineering Department},
                addressline={University of Illinois Urbana-Champaign}, 
                city={Urbana},
                postcode={695013}, 
                city={Urbana},
                state={Illinois},
                country={USA}}

\author[1]{Ramavarapu {S. Sreenivas}}[type=editor,
                        auid=000,bioid=1,
                        orcid=0000-0002-8242-0839]
\ead{rsree@illinois.edu}

\credit{Writing – review \& editing}

\author[1]{William {R. Norris}}[%
    orcid=0000-0002-4940-4458
   ]
\fnmark[1]
\ead{wrnorris@illinois.edu}

\credit{Writing - Review}

\affiliation[2]{organization={Construction Engineering Research Laboratory},
                addressline={U.S. Army Corps of Engineers Engineering Research and Development Center}, 
                postcode={61822}, 
                city={Urbana},
                state={Illinois},
                country={USA}}

\author[2]{Ahmet Soylemezoglu}
\ead{ahmet.soylemezoglu@erdc.dren.mil}

\credit{Writing - Review}

\author[2]{Dustin Nottage}[
  orcid=0000-0003-4950-099X
]
\ead{dustin.S.nottage@erdc.dren.mil}

\credit{Writing - Review}


\cortext[cor1]{Corresponding author}
\fntext[fn1]{This author has passed away.}

\nonumnote{The code is available for replication of the results at https://github.com/Marius-Juston/SLLLipschitzInitialization}

\begin{abstract}
This paper discusses the weight parametrization of two standard 1-Lipschitz network architectures, the Almost-Orthogonal-Layers (AOL) and the SDP-based Lipschitz Layers (SLL). It examines their impact on initialization for deep 1-Lipschitz feedforward networks, and discusses underlying issues surrounding this initialization. These networks are mainly used in certifiably robust classification applications to combat adversarial attacks by limiting the impact of perturbations on the classification output. Exact and upper bounds for the parameterized weight variance were calculated assuming a standard Normal distribution initialization; additionally, an upper bound was computed assuming a Generalized Normal Distribution, generalizing the proof for Uniform, Laplace, and Normal distribution weight initializations. It is demonstrated that the weight variance holds no bearing on the output variance distribution and that only the dimension of the weight matrices matters. Additionally, this paper demonstrates that the weight initialization always causes deep 1-Lipschitz networks to decay to zero.  
\end{abstract}



\begin{keywords}
1-Lipschitz Network \sep Kaiming initialization \sep Almost-Orthogonal-Layers \sep Generalized Normal Distribution
\end{keywords}

\maketitle

\section{Introduction} \label{s_Intro}

The robustness of deep neural networks, primarily against adversarial attacks, has been a significant challenge in the field of modern applications of machine learning \citet{Nguyen2015, Szegedy2013, Biggio2013} by manipulating the input so that the model produces incorrect output. The problem of network robustness in deep networks stems mainly from the fact that large network weight magnitudes have an exponential impact on the output, the deeper it goes. The significant weight magnitudes thus enable a small perturbation to the input to cause a drastic change in the classification output \citet{Costa2024}.   
\par
The design of the 1-Lipschitz neural network has provided a reliable solution to certifying the network to be robust, such that the decision output remains the same within a sphere of perturbation \citet{Tsuzuku2018}. For the design, multiple approaches have been proposed, ranging from utilizing Spectral Normalization (SN) \citet{Miyato2018, Roth2020}, Orthogonal Parametrization \citet{Trockman2021}, Convex Potential Layers (CPL) \citet{Meunier2022}, Almost-Orthogonal-Layers (AOL) \citet{Prach2022}, Sandwich layers \citet{Wang2023DirectNetworks} and the recent SDP-based Lipschitz Layers (SLL) \citet{Araujo2023}. All these previous techniques assume standard Kaiming initialization scheme for its parameterized layered weights, which might or might not be appropriate for these networks. This remains under-studied for these networks; paper aims to further the theoretical initialization dynamics understanding of AOL and SLL networks. 
\par
This paper explores the impact of the weight parameterization of 1-Lipschitz networks employing Almost-Orthogonal-Layers and SDP-based Lipschitz Layers on the initialization of deep neural networks. Exploring the challenges in applying certifiably robust neural networks, such as 1-Lipschitz neural networks, is crucial, especially as neural network attacks become more frequent and robust classification results become more important. As such, discussing ways to improve training for deeper neural network architectures is important to understand and address. This article aims to illuminate some of the issues underlying these weight-normalizing networks.

\begin{itemize}
    \item An extended derivation for the network layer variance while accounting for the bias term and its recursive definition using the ReLU activation function is provided.
    \item Given the structure for the Almost-Orthogonal-Layers and SDP-based Lipschitz Layers feedforward network structure and weight parameterization, an upper bound and exact network weight variance is derived assuming a normal distribution initialization
    \item A general upper bound based on the Generalized Normal Distribution for the parameterized network weight variance is derived.
    \item Based on the calculated weight variance, insights for the 1-Lipschitz network are discussed as to potential issues in this network's initialization. 
\end{itemize}

The initial work for the initialization analysis is inspired by the works of Kaiming \citet{He2015} and Xavier \citet{Glorot2010}, while the 1-Lipschitz network structure is derived from \citet{Araujo2023}.

Understanding the initialization characteristics of these networks is extremely important as it allows researchers to better understand how to generate deeper Lipschitz networks, which enable robust certification guarantees, thus the bounds derived in the paper can help with researchers to develop more novel initialization or parameterization methods to help mitigate the issues that arise from the current state of the art methods.

\section{Related Work}

The starting work from Xavier \citet{Glorot2010} was a pivotal moment for deep neural networks with the methodology to properly initialize deep neural networks such that they would converge, assuming hyperbolic tangent activation functions; however, their work posed simplifying assumptions on the activation functions which caused issues when transition to more modern activation functions such as the commonly used ReLU \citet{Nair2010}.
\par
The works by Kaiming expanded on this concept and developed a method to generate weight initializations for deep networks using the Parameterized ReLU family \citet{He2015}; this work demonstrated the ability to ensure that the network would converge and train properly, regardless of depth. Since then, all modern machine learning has used Kaiming initialization for its networks, and modifications to the initialization gain have been activation-specific to ensure the stability criteria derived by Kaiming remain satisfied, as in the work on SELUs \citet{Klambauer2017}. An issue with the works above is the assumption of a bias term initialized to zero. From these works the connection to the correct initialization choice to enabling gradient stability establishes a direct link to impact the training accuracy and thus the robustness of networks.
\par
In conjunction with the works for network initialization, \citet{Araujo2023} developed a unifying methodology to combine multiple existing 1-Lipschitz network structures into a unifying framework. This framework provides a guideline for creating a new, certifiably robust neural network. The authors achieve this by formulating feedforward networks as a nonlinear robust control Lur'e system \citet{lur1944theory} and enforcing conditions on the generalized residual network's weights via SDP constraints. From this work, they can demonstrate general conditions for enforcing a multilayered 1-Lipschitz network and combine previous works from Spectral Normalization (SN) \citet{Miyato2018, Roth2020}, Orthogonal Parameterization \citet{Trockman2021}, Convex Potential Layers (CPL) \citet{Meunier2022}, and Almost-Orthogonal-Layers (AOL) \citet{Prach2022} into a single constraint. From the framework, they generate an augmented version of the AOL with additional parameterization, called SDP-based Lipschitz Layers, which improves the network's generalizability. However, previous work on robust networks also uses standard Kaiming initialization for network weights. In addition, because 1-Lipschitz activation functions are required, the ReLU activation function is commonly used and will also be used in the proofs below. This article explores the impact of using such an initialization scheme on networks. While the authors of \citet{Araujo2023} use the residual network, which, due to the additional interdependence, will be explored in future work, as convolution layers can be represented as a similar feedforward structure, the proof for the feedforward network generalizes to convolution layers \citet{Chetlur2014}. 

\section{Feed Forward Variance With Bias}

This article starts with a definition similar to the Kaiming \citet{He2015} and Xavier \citet{Glorot2010} initialization schemes; however, unlike their implementations, which set the bias to zero, this assumption is not made. The bias term is assumed to be a normally distributed, IID variable, similar to the weight matrix. The activation function is assumed to be ReLU for this derivation, as it is used in SLL and AOL networks. The desired end goal was to find $\Var{y_l}$. The variable $y_l$ was defined as:
\begin{align}
\boldsymbol{y_l} &=  W_l T_l^{-\frac{1}{2}}\boldsymbol{x_l} + \boldsymbol{b_l} \nonumber \\
\boldsymbol{x_l} &= \sigma(\boldsymbol{y_{l - 1}}).
\end{align}
Where $\sigma(x) = \max(0, x)$, which was the ReLU activation function. The matrix $T_l$ is a positive definite diagonal matrix as defined by the SLL 1-Lipschitz function definition \citet{Araujo2023}:
\begin{align}
T_l &= \text{diag}\left(\sum_{j = 1}^n \abs{W_l^T W_l }_{ij} \frac{q_j}{q_i}\right), q_i > 0 .
\end{align}
In this article, the parameter $q_i$ is initialized to the constant $1$; when set to the unit vector, the SLL $T_l$ derivation also encapsulates the AOL parameterization \citet{Prach2022}. The vectors were defined such that $x_l \in \mathbb{R}^{n_l \times 1}$, $W_l \in \mathbb{R}^{d_l \times n_l}$, $b_l \in \mathbb{R}^{d_l \times 1}$ with the following assumptions:
\begin{itemize}
    \item The initialized elements in $W_l$  were mutually independent and shared the same distribution $\forall l, j$, $\Cov{W_l, W_j} = 0$ with $l \neq j$,  and that $\operatorname{Var}[W_1] = \cdots = \Var{W_l}$
    \item Likewise, the elements in $x_l$  were mutually independent and shared the same distribution. $\forall l, j$, $\Cov{x_l, x_j} = 0$ with $l \neq j$ , and that $\Var{x_1} = \cdots  = \Var{x_l}$
     \item Additionally, the elements in $b_l$  were mutually independent and shared the same distribution. $\forall l, j$, $\Cov{b_l, b_j} = 0$ with $l \neq j$ , and that $\Var{b_1} = \cdots = \Var{b_l}$
    \item The vectors $x_l$, $W_l$ and, $b_l$ were independent of each other, $\Cov{W_l, x_l} = \Cov{W_l, b_l} = \Cov{b_l, x_l} = 0$
\end{itemize}
Under these assumptions, it could be determined that:
\begin{align}
\Var{\boldsymbol{y_l} } &= \Var{W_l x_l + b_l}  \nonumber \\
&= \Var{\begin{bmatrix}
    \sum_{j = 1}^{n_l} w_{1, j} x_j + b_1\\
    \sum_{j = 1}^{n_l} w_{2, j} x_j + b_2\\
    \vdots \\
    \sum_{j = 1}^{n_l} w_{d_j, j} x_j + b_{d_j}
\end{bmatrix}}  \nonumber \\
&= \sum_{k = 1}^{d_l}\left(\sum_{j = 1}^{n_l} \Var{w_l x_l} + \Var{b_l} \right) \nonumber \\
d_l \times \Var{y_l} &=  d_l \times  \left( n_l \times \Var{w_l x_l} + \Var{b_l} \right)  \nonumber \\
\Var{y_l} &= n_l \times \Var{w_l x_l} + \Var{b_l} .
\end{align}
The results were the same as those of Kaiming and Xavier, except for the addition of a bias term. Given the independence between the terms, the layer's variance could be expanded as:
\begin{align}
\Var{y_l} &= n_l \times \Var{w_l x_l} + \Var{b_l}  \nonumber \\
 &= n_l \times \left( \underbrace{\E{w_l^2}}_{\Var{w_l}} \E{x_l^2} - \underbrace{\E{w_l}^2}_{=0}\E{x_l}^2  \right) + \Var{b_l}  \nonumber \\
 &= n_l \Var{w_l} \E{x_l^2} + \Var{b_l}. \label{eqn:recusiveNonfullDefinition}
\end{align}
The $\E{x_l}$ does not have zero mean because the previous layer is $x_l = \max(0, y_{l - 1})$ and thus does not have zero mean. As such, $\E{x_l^2} = \E{\max(0, y_{l - 1})^2}$ needed to be handled.

\subsection{ReLU Expected Value}

In Kaiming's work, the expected value of the ReLU was derived; however, the bias term was set to zero. In contrast, the following expected value derivation includes bias in its computation:

Given that $b_{l -1}, w_{l - 1}, x_{l - 1}$ were independent.
\begin{align}
    \E{y_{l - 1}} &= \E{w_{l - 1} x_{l - 1} + b_{l- 1}}  \nonumber \\
        &= \underbrace{\E{w_{l - 1}}}_{=0}\E{ x_{l - 1}} + \underbrace{\E{b_{l- 1}}}_{=0} = 0.
\end{align}
\begin{theorem}
    Given an ReLU activation function, $\sigma(\cdot)$ the variance of the linear layer $y_l = \sigma(w_{l - 1} y_{l - 1} + b_{l - 1})$, where $\E{w_{l - 1}} = \E{b_{l - 1}} = 0$ and $y_{l - 1}$ is an unknown random variable has the following output variance, $\Var{y_{l-1}} = \E{y_{l-1}^2}$ and mean $\E{y_l} = 0$.
\end{theorem}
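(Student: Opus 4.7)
The plan is to reduce both claims to a single fact: passing any random input through a linear map whose weights and bias are independent of that input and centered at zero produces an output with zero mean; the variance-equals-second-moment identity then follows from $\Var{X} = \E{X^2} - \E{X}^2$.

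First I would compute $\E{y_l}$ directly. Treating the layer equation as the pre-activation $y_l = w_{l-1} y_{l-1} + b_{l-1}$, linearity gives $\E{y_l} = \E{w_{l-1} y_{l-1}} + \E{b_{l-1}}$. Using the assumed independence of $w_{l-1}$ from $y_{l-1}$, the first term factors as $\E{w_{l-1}} \cdot \E{y_{l-1}}$, which vanishes because $\E{w_{l-1}} = 0$. Combined with $\E{b_{l-1}} = 0$, this yields $\E{y_l} = 0$.

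Second, the identity $\Var{y_{l-1}} = \E{y_{l-1}^2}$ is simply $\Var{X} = \E{X^2} - \E{X}^2$ specialised to a zero-mean $X$. Applying the previous paragraph's argument at index $l-1$ supplies the required $\E{y_{l-1}} = 0$, so both conclusions collapse to the same inductive statement: independent, zero-mean weights and biases propagate zero mean forward into every pre-activation. A base case at the network's input (centred or deterministic) closes the induction.

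The main obstacle here is interpretive rather than technical. As typeset, the theorem writes $y_l = \sigma(\cdots)$ with $\sigma$ the nonnegative ReLU, which is incompatible with $\E{y_l} = 0$ unless $y_l \equiv 0$. I would therefore open the proof by reconciling this with the surrounding notation, identifying $y_l$ with the pre-activation $W_l T_l^{-1/2} x_l + b_l$ of the first display in Section~III, so that $\sigma$ has been applied in forming the input $x_l = \sigma(y_{l-1})$ rather than wrapped around the whole expression. Under that reading, no distributional assumption on $y_{l-1}$ beyond its independence from $w_{l-1}$ and $b_{l-1}$ is required, and the proof reduces to a few lines of linearity, independence, and the zero-mean hypothesis.
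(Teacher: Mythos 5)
You are right that the theorem statement is garbled (a nonnegative ReLU output cannot have zero mean), and your reinterpretation of $y_l$ as the pre-activation is the one consistent with the surrounding text. Your two observations — $\E{y_l}=0$ by linearity and independence, and $\Var{y_{l-1}}=\E{y_{l-1}^2}$ as the zero-mean special case of the variance identity — are correct and match the short computation the paper performs \emph{before} the theorem. But they are not what the paper's proof of this theorem actually establishes, and they are not the result the rest of the derivation consumes. The substantive content, used immediately afterwards to obtain the recursion $\Var{y_l}=\frac{n_l}{2}\Var{w_l}\Var{y_{l-1}}+\Var{b_l}$, is the ReLU second-moment identity $\E{x_l^2}=\E{\max(0,y_{l-1})^2}=\tfrac12\E{y_{l-1}^2}$. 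Your proof never touches the ReLU, so it cannot produce the factor of $\tfrac12$ that drives everything downstream; this is why the hypothesis mentions the activation function at all.

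The missing step is also not a triviality that follows from what you proved. Zero mean does not imply $\Proba{y_{l-1}>0}=\tfrac12$, and even $\Proba{y_{l-1}>0}=\tfrac12$ is not by itself enough: one needs the distribution of $y_{l-1}=w_{l-1}x_{l-1}+b_{l-1}$ to be symmetric about zero, so that $\E{y_{l-1}^2\,;\,y_{l-1}>0}=\tfrac12\E{y_{l-1}^2}$. The paper gets this from the assumption that $w_{l-1}$ and $b_{l-1}$ are \emph{symmetrically distributed} about zero (not merely mean-zero): $w_{l-1}x_{l-1}$ is then symmetric for any independent $x_{l-1}$, and adding an independent symmetric $b_{l-1}$ preserves symmetry — this is the content of the conditional-probability manipulation in the paper's proof. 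Your closing remark that ``no distributional assumption on $y_{l-1}$ beyond independence is required'' is therefore true only for the literal (and essentially vacuous) reading of the statement; for the identity the theorem exists to deliver, a mean-zero but skewed weight or bias distribution would break the argument. To complete the proof you would need to add the symmetry hypothesis, deduce that $y_{l-1}\overset{d}{=}-y_{l-1}$, and then split $\E{y_{l-1}^2}$ over the events $\{y_{l-1}>0\}$ and $\{y_{l-1}<0\}$ to extract the factor $\tfrac12$.
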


\begin{proof}
Because $w_{l - 1}$ and $b_{l - 1}$ have zero mean and were distributed symmetrically around zero:
\begin{align}
\Proba{b_{l- 1} > 0} &= \frac{1}{2}, \\
    \Proba{y_{l - 1} > 0} &= \Proba{w_{l - 1} x_{l - 1} + b_{l- 1} > 0}  \nonumber\\
                          &= \Proba{w_{l - 1} x_{l - 1}  > - b_{l- 1}} \nonumber\\
   &= \Proba{w_{l - 1} x_{l - 1}  > \left(b_{l- 1} > 0 \text{ or } b_{l - 1} < 0\right)} .
\end{align}
Using conditional probability:
\begin{align}
    \Proba{y_{l - 1} > 0}  =& \Proba{w_{l - 1} x_{l - 1}  > -b_{l-1} \rvert b_{l-1} > 0} \Proba{b_{l-1} > 0} \nonumber \\ &+ \Proba{w_{l - 1} x_{l - 1}  > -b_{l-1} \rvert b_{l-1} < 0} \Proba{b_{l-1} < 0}  \nonumber \\
    =& \frac{1}{2} \left( \Proba{w_{l - 1} x_{l - 1}  > -b_{l-1} \rvert b_{l-1} > 0} \right. \nonumber \\ &+ \left. \Proba{w_{l - 1} x_{l - 1}  > -b_{l-1} \rvert b_{l-1} < 0}  \right).
\end{align}
Using the property of symmetry around zero :
\begin{align}
    \Proba{w_{l - 1} x_{l - 1}  > t} = \Proba{w_{l - 1} x_{l - 1}  < -t }, \forall t \in \mathbb{R} .
\end{align}
Given this:
\begin{small}
\begin{align}
\Proba{w_{l - 1} x_{l - 1}  > -b_{l-1} \rvert b_{l-1} < 0} &= \Proba{w_{l - 1} x_{l - 1}  > b_{l-1} \rvert b_{l-1} > 0}  \nonumber \\
                                                           &=  \Proba{w_{l - 1} x_{l - 1}  < -b_{l-1}  \rvert b_{l - 1} > 0}.
\end{align}
\end{small}
As such:
\begin{align}
      \Proba{y_{l - 1} > 0}  
      =& \frac{1}{2} \left( \Proba{w_{l - 1} x_{l - 1}  > -b_{l-1} \rvert b_{l-1} > 0} \right. \nonumber \\  &+    \left. \Proba{w_{l - 1} x_{l - 1}  < -b_{l-1}  \rvert b_{l - 1} > 0}\right)  \nonumber \\
      =& \frac{1}{2} (1) = \frac{1}{2}.
\end{align}
As concluded, $y_{l - 1}$ was indeed centered on zero and symmetric around the mean. The expectation of $x_l^2$ could now be computed:
\begin{align}
    \E{x_l^2} &= \E{\max(0, y_{l - 1})^2}  \nonumber \\
                &= \Proba{y_{l - 1} < 0}\E{0} + \Proba{y_{l - 1} > 0}\E{ y_{l - 1}^2}  \nonumber\\
                &= \frac{1}{2}\E{ y_{l - 1}^2} = \frac{1}{2}\Var{ y_{l - 1}}.
\end{align}
\end{proof}

Plugging this back into \ref{eqn:recusiveNonfullDefinition}, it was computed that:
\begin{align}
    \Var{y_l} &= n_l \Var{w_l} \E{x_l^2} + \Var{b_l}  \nonumber \\
    &= \frac{n_l}{2}  \Var{w_l}  \Var{ y_{l - 1}}  + \Var{b_l} .
\end{align}
A recursive equation between the actions at layer $l$ and the activations at layer $l- 1$ was evaluated. Starting from the first layer, $2$, the following product was formed:
\begin{align}
   \Var{y_L} =& \prod_{l = 2}^L \left(\frac{n_l}{2}  \Var{w_l}\right) \Var{y_1}  \nonumber \\ &+ \sum_{l = 2}^{L - 1}  \left( \prod_{d=1}^{L - l}  \left(\frac{n_{L -d + 1}}{2}  \Var{w_{L -d + 1}} \right)\Var{b_l} \right) \nonumber \\  &+ \Var{b_L}  . \label{eqn:layerVariance}
\end{align}
Thus, this was a similar implementation to the network variance derivation determined by Kaiming, but with the bias term included.

\section{Transformed Weight Variance} 

The next step was to better understand what $\Var{w_l}$ was, given the network structure of the $WT^{-\frac{1}{2}}$. To see how the weight matrix was transformed, with the vector $q_i = 1$ as previously stated, an example weight matrix $W \in \mathbb{R}^{4 \times 2}$ was looked at. Following this weight dimension, the following transformation was acquired:
\begin{align}
    W &= \begin{bmatrix}
         a & c & e & g \\
         b & d & f & h
    \end{bmatrix}^T.
\end{align}
Where the transformed matrix was thus:
\begin{align}
 \bar{W} = W T^{-\frac{1}{2}} = W \text{diag}\left(\sum_{j=1}^n\abs{W^T W}_{ij}\right)^{-\frac{1}{2}} =
\end{align}
\begin{small}
\begin{align}
    \begin{bmatrix}
 \frac{a}{\sqrt{\left| a^2+c^2+e^2+g^2\right| +| a b+c d+e f+g h| }} & \frac{b}{\sqrt{| a b+c d+e f+g h| +\left|
   b^2+d^2+f^2+h^2\right| }} \\
 \frac{c}{\sqrt{\left| a^2+c^2+e^2+g^2\right| +| a b+c d+e f+g h| }} & \frac{d}{\sqrt{| a b+c d+e f+g h| +\left|
   b^2+d^2+f^2+h^2\right| }} \\
 \frac{e}{\sqrt{\left| a^2+c^2+e^2+g^2\right| +| a b+c d+e f+g h| }} & \frac{f}{\sqrt{| a b+c d+e f+g h| +\left|
   b^2+d^2+f^2+h^2\right| }} \\
 \frac{g}{\sqrt{\left| a^2+c^2+e^2+g^2\right| +| a b+c d+e f+g h| }} & \frac{h}{\sqrt{| a b+c d+e f+g h| +\left|
   b^2+d^2+f^2+h^2\right| }}
  \end{bmatrix}. \label{eqn:complexExample}  \nonumber
\end{align}
\end{small}
\subsection{Upper bound}

Given the highly complex distribution generated from this output, the system's complexity was reduced by looking at the upper bound approximation of \ref{eqn:complexExample}. This could be done by simply removing the off-diagonal terms in the normalization denominator as such:
\begin{align}
     W T^{-\frac{1}{2}} \leq \begin{bmatrix}
 \frac{a}{\sqrt{\left| a^2+c^2+e^2+g^2\right| }} & \frac{b}{\sqrt{\left|
   b^2+d^2+f^2+h^2\right| }} \\
 \frac{c}{\sqrt{\left| a^2+c^2+e^2+g^2\right| }} & \frac{d}{\sqrt{\left|
   b^2+d^2+f^2+h^2\right| }} \\
 \frac{e}{\sqrt{\left| a^2+c^2+e^2+g^2\right|  }} & \frac{f}{\sqrt{\left|
   b^2+d^2+f^2+h^2\right| }} \\
 \frac{g}{\sqrt{\left| a^2+c^2+e^2+g^2\right| }} & \frac{h}{\sqrt{\left|
   b^2+d^2+f^2+h^2\right| }}
  \end{bmatrix}.
\end{align}
More generally, for this upper bound, each element was thus represented as:
\begin{align}
    \bar{w}_i &= \frac{w_i}{\sqrt{\sum_{j = 1}^{d_l} w_j^2}}.
\end{align}
Given that $\E{w_i} = 0$, it was expected that the normalized expected value of $\E{\bar{w}_i} = 0 $ as well. As such, the variance for this system was defined as $\Var{\bar{w}_i} = \E{\bar{w}_i^2}$.
\begin{align}
    \E{\bar{w}_i^2} &= \E{\frac{w_i^2}{\sum_{j = 1}^{d_l} w_j^2}} = \E{\frac{w_i^2}{w_i^2 + \sum_{j = 1, j \neq i}^{d_l} w_j^2}}.
\end{align}
Given that the distribution $w_i \sim N(0, \sigma^2)$ the distribution $w_i^2$ represented a scaled Chi-Squared distribution defined as $w_i^2 \sim \sigma^2 \chi^2(1)$ which will be presented as $X$. The additional independent term $\sum_{j = 1, j \neq i}^{d_l} w_j^2$ was thus represented as a Chi-Squared distribution of the form $\sum_{j = 1, j \neq i}^{d_l} w_j^2  \sim \sigma^2 \chi^2(d_l - 1)$, represented as $Y$. The distribution thus followed the form:
\begin{align}
    \bar{w}_i^2 = \frac{X}{X + Y}. \label{eqn:upperDistribution}
\end{align}
\begin{theorem}
If $X$ and $Y$ are independent, with $X \sim \Gamma(\alpha, \theta)$ and $Y \sim \Gamma(\beta, \theta)$ then \citep[Theorem 3.15.]{1982SamplesDistributions}:
\begin{align}
    \frac{X}{X + Y} \sim \boldsymbol{B}(\alpha, \beta),
\end{align}
where $\boldsymbol{B}(\alpha, \beta)$ represents a Beta distribution and $\Gamma(\alpha, \theta)$ represents a Gamma distribution.
\end{theorem}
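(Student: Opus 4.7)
The plan is to establish the result via a bivariate change of variables: define $U = X/(X+Y)$ and $V = X + Y$, then derive the joint density of $(U, V)$ from that of $(X, Y)$ and observe that it factors. This simultaneously identifies the marginal of $U$ as $\boldsymbol{B}(\alpha, \beta)$ and yields the bonus fact that $U$ is independent of $V \sim \Gamma(\alpha + \beta, \theta)$, which may be useful in the subsequent analysis of (17).

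First, I would write the joint density $f_{X, Y}(x, y)$ using independence of $X$ and $Y$ together with the Gamma pdf. Second, the inverse map $X = UV$, $Y = V(1 - U)$ sends $(u, v) \in [0, 1] \times (0, \infty)$ bijectively onto the positive quadrant, and a short computation gives Jacobian determinant equal to $v$. Third, after substitution and multiplication by $|v|$, the exponent combines as $-(x + y)/\theta = -v/\theta$ and the polynomial part $x^{\alpha-1} y^{\beta-1}$ becomes $v^{\alpha+\beta-2} u^{\alpha-1}(1-u)^{\beta-1}$, so the joint density of $(U, V)$ visibly separates into a $u$-factor and a $v$-factor. Fourth, inserting $\Gamma(\alpha + \beta)/\Gamma(\alpha + \beta)$ lets the $v$-piece be recognized as the $\Gamma(\alpha + \beta, \theta)$ density, which integrates to $1$, while the $u$-piece becomes $u^{\alpha-1}(1-u)^{\beta-1}/B(\alpha, \beta)$ via the classical identity $B(\alpha, \beta) = \Gamma(\alpha)\Gamma(\beta)/\Gamma(\alpha + \beta)$.

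There is no genuine obstacle here --- the argument is purely bookkeeping --- but the one step requiring care is the combination of the polynomial factors so that the $v$ exponent comes out to $\alpha + \beta - 1$ (one power of $v$ from each of $x^{\alpha-1}$ and $y^{\beta-1}$, reduced by two, then increased by one from the Jacobian). A useful sanity observation is that the claim is scale-free in $\theta$: all $\theta$ dependence migrates into the $V$-marginal, which is exactly why the stated conclusion does not mention $\theta$. In the subsequent application of this theorem to (17), it will be invoked with $\alpha = 1/2$ and $\beta = (d_l - 1)/2$, since $\sigma^2 \chi^2(k)$ is a $\Gamma(k/2, 2\sigma^2)$ law, giving $\bar{w}_i^2 \sim \boldsymbol{B}\bigl(1/2, (d_l - 1)/2\bigr)$.
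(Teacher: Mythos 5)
Your change-of-variables argument ($U=X/(X+Y)$, $V=X+Y$, Jacobian $v$, factorization of the joint density) is correct and complete; it is the standard textbook proof of this fact. The paper itself gives no proof of this theorem --- it is stated with only a citation to an external reference --- so there is nothing in the paper to compare against; your derivation fills that gap correctly, and the only nitpick is that you use the scale parametrization ($e^{-(x+y)/\theta}$) whereas the paper's later substitution $\sigma^2\chi^2(n)\sim\Gamma\left(\tfrac{n}{2},\tfrac{1}{2\sigma^2}\right)$ uses the rate convention; this is immaterial since the conclusion is free of $\theta$ either way.
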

Given that Chi-Squared distributions can be represented as gamma distributions, $\Gamma(\alpha, \beta)$, with distribution parameters defined as $ \sigma^2 \chi^2(n) \sim \Gamma(\frac{n}{2}, \frac{1}{2 \sigma^2})$, the resultant Beta distribution and its expected value is thus:
\begin{align}
    \bar{w}_i^2 &\sim \boldsymbol{B}\left(\frac{1}{2}, \frac{d_l - 1}{2}\right), \\
    \E{\bar{w}_i^2} &= \frac{\alpha}{\alpha + \beta} 
                    = \frac{\frac{1}{2}}{\frac{1}{2} + \frac{d_l - 1}{2}} 
                    = \frac{1}{d_l}. \label{eqn:upperVariance}
\end{align}
This demonstrated that the resultant distribution variance between the layers depended only on the dimension $d_l$ of the weight matrix $W$, and that the initial distribution variance, $\sigma^2$, had no impact. This made creating an initialization scheme complicated, as no matter the initial variance of the weight matrix $W$, the output distribution would not be affected—only the matrix's dimension mattered, which was predetermined.

\subsection{Exact bound} \label{sec:exactBound}

In addition to deriving the upper bound approximation of the output distribution, the exact distribution's variance was computed by considering the off-diagonal terms. Similarly to the generalized distribution, the generalized form of each element was examined. Represented by the following formula:
\begin{align}
    \hat{w}_i &= \frac{w_i}{\sqrt{w_i^2 + \sum_{j = 1, j \neq i}^{d_l} w_j^2 + \sum_{j = 1, j \neq i}^{n_l}\left|w_i w_a + \sum_{k = 1}^{d_l - 1}w_b w_c \right|}}.
\end{align}
The weights $w_a, w_b, w_c$ were random elements from $W$; since the actual indexing does not affect the bound, the exact indexing is ignored. Similarly to the upper bound $\E{\hat{w}_i} =0$ and thus $\Var{\hat{w}_i} = \E{\hat{w}_i^2}$:
\begin{align}
    \hat{w}_i^2 &= \frac{w_i^2}{w_i^2 + \sum_{j = 1, j \neq i}^{d_l} w_j^2 + \sum_{j = 1, j \neq i}^{n_l}\left|w_i w_a + \sum_{k = 1}^{d_l - 1}w_b w_c \right|}. \label{eqn:variation_w}
\end{align}
As with the upper bound, the distribution $w_i^2 = X \sim \sigma^2 \chi^2(1)$ and $ \sum_{j = 1, j \neq i}^{d_l} w_j^2 = Y \sim \sigma^2 \chi^2(d_l - 1)$ were present; however, the additional term $\sum_{j = 1, j \neq i}^{n_l}\left|w_i w_a + \sum_{k = 1}^{d_l - 1}w_b w_c \right|$ $= Z$ provided a challenge as to what kind of distribution it would be. The $Z$ variable thus needed to be analyzed.

The product of two IID Gaussian samples, $w_b w_c$, denoted as a Normal Product Distribution, \citet{Weisstein2003} has the following probability density function (PDF) distribution:
\begin{align}
    p(w) &= \frac{K_0\left(\frac{| w| }{\sigma ^2}\right)}{\pi  \sigma ^2}
\end{align}
Where $K_n(z)$ was the modified Bessel function of the second kind \citet{822801}, 
\begin{align}
    K_n(z) &= \sqrt{\frac{\pi}{2z}} \frac{e^{-z}}{(n - \frac{1}{2})!} \int_0^\infty e^{-t}t^{n - \frac{1}{2}}\left(1 - \frac{t}{2z}\right)^{n - \frac{1}{2}}  dt,
\end{align}
To then derive the generalized sum of the Normal Product Distribution $\sum^{d_l}_{k = 1} w_{k,b} w_{k, c}$, this involves taking the convolution of the continuous probability distributions $d_l$ times; however, due to the modified Bessel function inside the PDF, this makes it difficult. Instead, the Fourier transform of the PDF can be taken,
\begin{align}
    \mathcal{F}(p(w)) &= \frac{1}{\sqrt{2 \pi } \sigma ^2 \sqrt{\frac{1}{\sigma ^4}+t^2}},
\end{align}
and then the convolution can be represented as taking the transformed function to the $n$-th power and inverting the transformed Fourier function,
\begin{align}
   \mathcal{F}^{-1}\left(\mathcal{F}(p(w))^n\right) &= \frac{2^{\frac{1}{2}-\frac{n}{2}} \sigma ^{-n-1} | w| ^{\frac{n-1}{2}} K_{\frac{n-1}{2}}\left(\frac{| w| }{\sigma ^2}\right)}{\sqrt{\pi } \Gamma
   \left(\frac{n}{2}\right)},
\end{align}
where $\Gamma(z)$ represents the Euler gamma function \citet{alma99327871512205899},
\begin{align}
    \Gamma(z) = \int_{0}^\infty t^{z-1}e^{-t}dt.
\end{align}
The expected value needed to be computed only when $w \ge 0$. The function is symmetric $p_w(w) = p_w(-w)$ and thus centered around zero, resulting in $P(w > 0) = \frac{1}{2}$. The original PDF function only needed to normalize a single side to generate a valid PDF. This thus resulted in the following output distribution.
\begin{align}
    p_{\abs{w}}(w) 
     &= \frac{1}{P_w(w \ge 0)} p_w(w \ge 0) \label{eqn:absoluteTaking} \\
    &= 2 p_w(w \ge 0) \\
            &= \frac{2^{\frac{3}{2}-\frac{n}{2}} \sigma ^{-n-1} w^{\frac{n-1}{2}} K_{\frac{n-1}{2}}\left(\frac{w}{\sigma ^2}\right)}{\sqrt{\pi } \Gamma
   \left(\frac{n}{2}\right)}.
\end{align}
Given this, the expected value could be computed as:
\begin{align}
    \E{p_{\abs{w}}} &= \int_{0}^\infty w p_\abs{w}(w) dw = \frac{2 \sigma ^2 \Gamma \left(\frac{n+1}{2}\right)}{\sqrt{\pi }
   \Gamma \left(\frac{n}{2}\right)} .
\end{align}
Sadly, this distribution could not be presented as a Gamma or Beta distribution. The trick for the upper bound cannot be used, as the distribution of $p_{\abs{w}}(w)$ depends on $w_i$; however, this explicit dependence was removed to simplify the computation and approximation. The expected value of the system is thus:
\begin{small}
\begin{align}
    \E{\hat{w}_i^2} &= \E{\frac{w_i^2}{w_i^2 + \sum_{j = 1, j \neq i}^{d_l} w_j^2 + \sum_{j = 1, j \neq i}^{n_l}\left|w_i w_a + \sum_{k = 1}^{d_l - 1}w_b w_c \right|}}  \nonumber\\
    &= \frac{\E{w_i^2}}{\E{w_i^2 + \sum_{j = 1, j \neq i}^{d_l} w_j^2 + \sum_{j = 1, j \neq i}^{n_l}\left|w_i w_a + \sum_{k = 1}^{d_l - 1}w_b w_c \right|}}  \nonumber\\
    &= \frac{\E{w_i^2}}{\E{w_i^2} + \E{\sum_{j = 1, j \neq i}^{d_l} w_j^2} + \E{\sum_{j = 1, j \neq i}^{n_l}\left|\sum_{k = 1}^{d_l }w_b w_c \right|}}  \label{eqn:expectedValueVariance} 
\end{align}
Substituting the expectation of each of the components, we get that, 
\begin{align}
   \E{\hat{w}_i^2} &= \frac{\sigma^2}{\sigma^2 +\sigma(d_l - 1) + \sum_{j = 1, j \neq i}^{n_l}\E{\left|\sum_{k = 1}^{d_l }w_b w_c \right|}}   \nonumber\\
    &= \frac{\sigma^2}{\sigma^2 +\sigma^2 (d_l - 1) + (n_l -1) \frac{2 \sigma ^2 \Gamma \left(\frac{d_l+1}{2}\right)}{\sqrt{\pi }
   \Gamma \left(\frac{d_l}{2}\right)}}  \nonumber \\
    &= \frac{1}{d_l + (n_l -1) \frac{2 \Gamma \left(\frac{d_l+1}{2}\right)}{\sqrt{\pi } \Gamma \left(\frac{d_l}{2}\right)} }. \label{eqn:exactVariance}
\end{align}
\end{small}
To make it more computationally stable, the logarithm of the $\Gamma(\cdot)$ function is usually used, as the factorial can become exceedingly large. This could be replaced with:
\begin{align}
    \frac{ \Gamma \left(\frac{d_l+1}{2}\right)}{ \Gamma \left(\frac{d_l}{2}\right)} &= e^{\ln {\Gamma \left(\frac{d_l+1}{2}\right)} - \ln{ \Gamma \left(\frac{d_l}{2}\right)}}.
\end{align}
The following simulated transformed weight matrix was sampled for varying values of $n_l$, with $d_l = 10 n_l$, to demonstrate that the output distribution variance was valid. Each sample point was evaluated at least 900,000 times to ensure the validity of the results. As shown in Figure \ref{fig:VarianceTransformedSimulation}, the upper bound in \ref{eqn:upperVariance} does indeed properly bound the variance of the weights. The theoretical variance computed in \ref{eqn:exactVariance} also matches the sampled distribution as noted through the perfect overlap. 

\begin{figure}
    \centering
    \includegraphics[width=0.9\columnwidth]{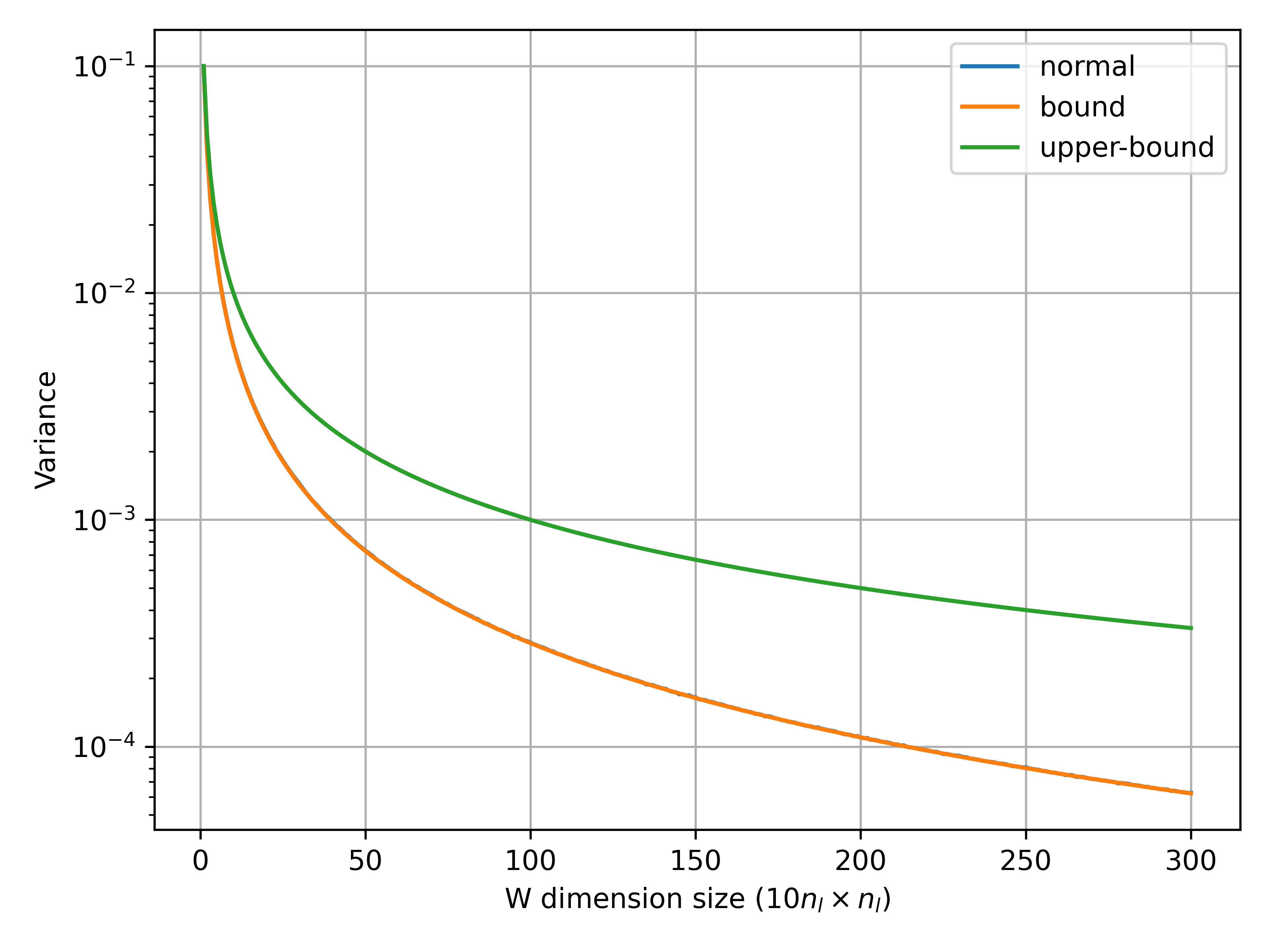}
    \caption{Transformed Weight Variance Simulation}
    \label{fig:VarianceTransformedSimulation}
\end{figure}

Thus, the derivation of $\Var{w_l}$ had been computed.

\subsection{Complete Forward Propagation Variance}

Given that the complete derivation of $\Var{w_l}$ had been computed, it could now be plugged back into the layer variance $\Var{y_L}$, in \ref{eqn:layerVariance}, for which the inner terms needed closer examination:
\begin{align}
    \frac{n_l}{2}\Var{w_l} &=  \frac{n_l}{2 d_l +  2 (n_l -1) \frac{2 \Gamma \left(\frac{d_l+1}{2}\right)}{\sqrt{\pi } \Gamma \left(\frac{d_l}{2}\right)}} .\label{eqn:decayEqn}
\end{align}
If the best case where $d_l = 1$ was assumed, this resulted in:
\begin{align}
    \frac{n_l}{2}\Var{w_l} &=  \frac{n_l}{2 +  2 (n_l -1) \frac{2 \Gamma \left(1\right)}{\sqrt{\pi } \Gamma \left(\frac{1}{2}\right)}},  \nonumber\\
     &=  \frac{n_l}{2 +  2 (n_l -1) \frac{2}{\pi}}, \nonumber \\
      &=  \frac{n_l}{2n_l + 2 - \frac{4}{\pi}}.
\end{align}
This informed us that no matter what the dimensionality of $\Var{w_l}$ in terms of $d_l$ or $n_l$, the output variance would always be less than $1$; even in the best case, it would converge to be $\frac{1}{2}$. This implied that given a sufficiently large $L$:
\begin{align}
   \prod_{l = 2}^L \left(\frac{n_l}{2}  \Var{w_l}\right) \Var{y_1} \approx 0 .
\end{align}
The bias term represented a converging geometric series given that the ratio term $r = \frac{n_l}{2}\Var{w_l} < 1$ (in this case, it was assumed that all layers had the same $d_l$ and $n_l$ to simplify the equation):
\begin{align}
    =& \sum_{l = 2}^{L - 1}  \left( \prod_{d=1}^{L - l}  \left(\frac{n_{L -d + 1}}{2}  \Var{w_{L -d + 1}} \right)\Var{b_l} \right) + \Var{b_L},  \nonumber \\
     =& \sum_{l = 2}^{L - 1}  \left(\Var{b_l}  \prod_{d=1}^{L - l}  r_d  \right) + \Var{b_L}       = \frac{\Var{b_l} }{1 - r}, \nonumber\\
    =& \frac{\Var{b_l} }{1 - \frac{n_l}{2 d_l +  2 (n_l -1) \frac{2 \Gamma \left(\frac{d_l+1}{2}\right)}{\sqrt{\pi } \Gamma \left(\frac{d_l}{2}\right)}}}.
\end{align}
Given that the bias term had a convergent property on the output layer variance, it did not truly matter what the variance of $\Var{b_l}$ was, as it would not cause the system to diverge and have exploding or vanishing output layer variances. To ensure that the output distribution's variance was close to one, the bias was set to:
\begin{align}
   \Var{y_L} =  \Var{b_l} &= 1 - \frac{n_l}{2 d_l +  2 (n_l -1) \frac{2 \Gamma \left(\frac{d_l+1}{2}\right)}{\sqrt{\pi } \Gamma \left(\frac{d_l}{2}\right)}} .\label{eqn:BiasInitialization}
\end{align}
Alternatively, $\Var{b_l} = 1$ provided similar results, as the variance function quickly decays to negligible values near zero. 
\par
This decay associated with the weight parameterization was demonstrated in Figure \ref{fig:HistVarianceActivation}, where a feedforward network with square weight matrices of dimension $8192 \times 8192$, using ReLU activations and weights initialized according to the standard Kaiming scheme with the bias term set to zero, was used. Figure \ref{fig:VarianceActivation} showed that the output distribution decayed quickly from the initial Gaussian input distribution with zero mean and variance of one. 

\begin{figure}
    \centering
    \includegraphics[width=0.9\columnwidth]{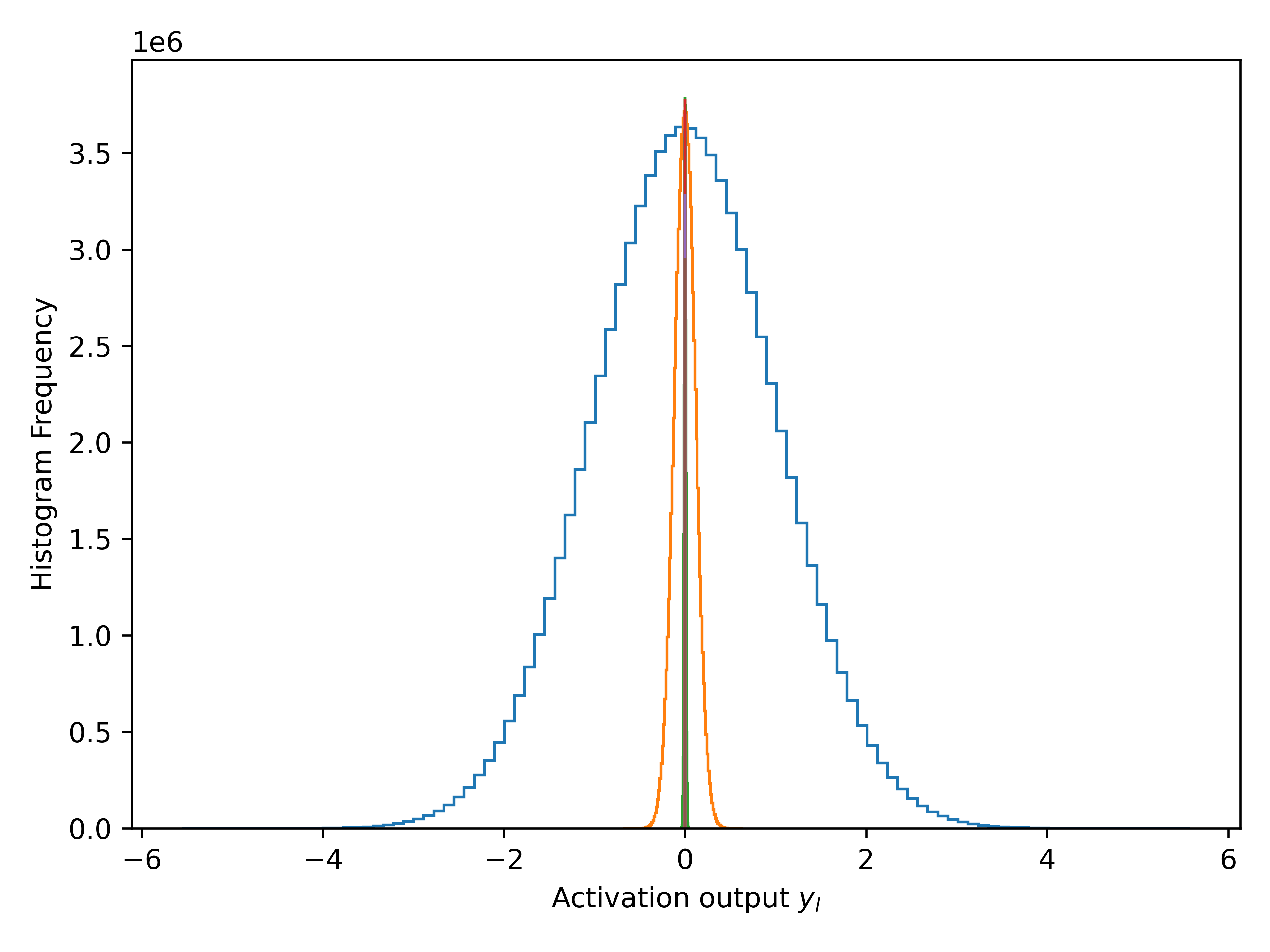}
    \caption{Forward Layer Activation Output }
    \label{fig:HistVarianceActivation}
\end{figure}

\begin{figure}
    \centering
    \includegraphics[width=0.9\columnwidth]{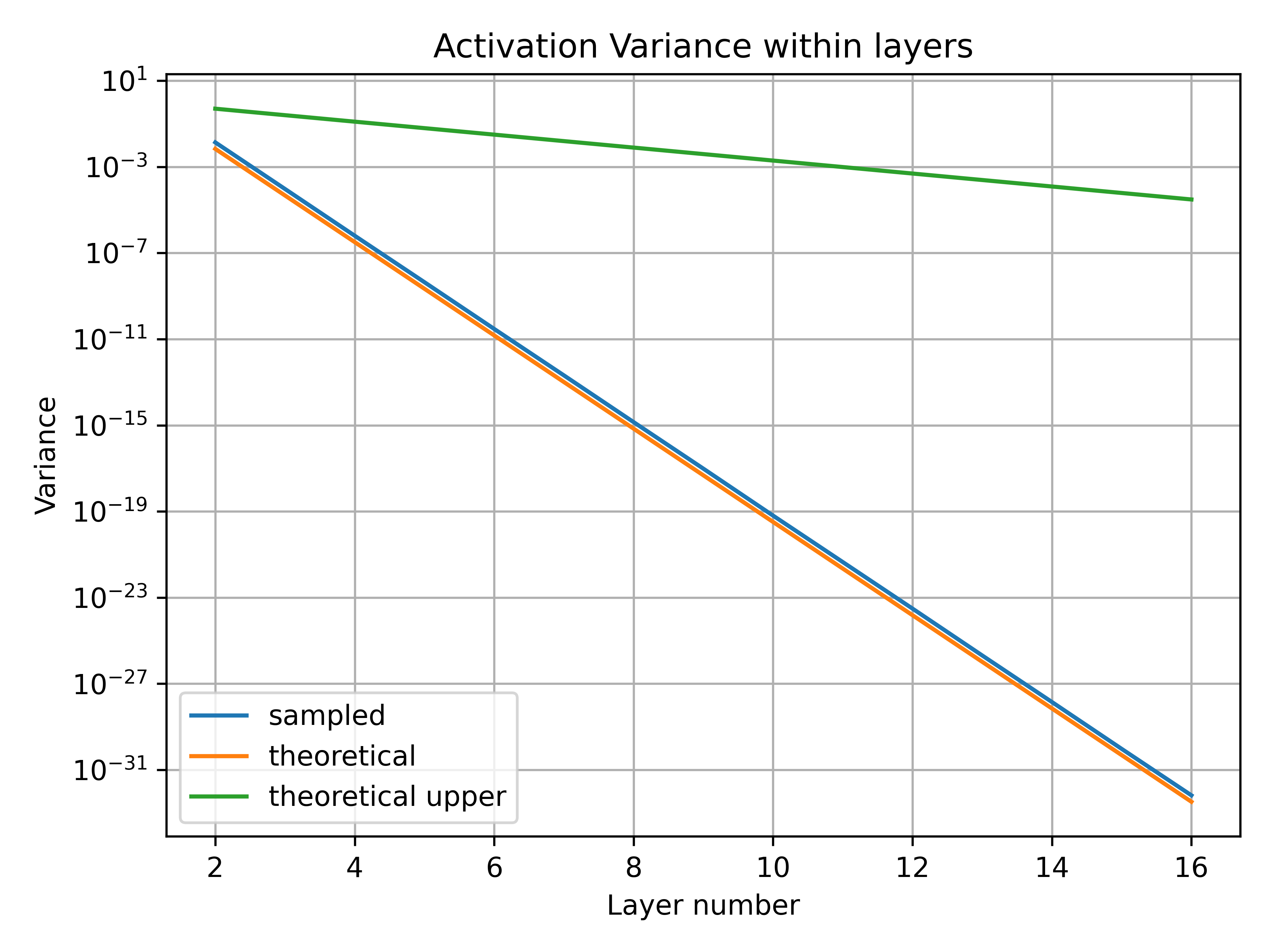}
    \caption{Forward Layer Activation Output Variances}
    \label{fig:VarianceActivation}
\end{figure}

\par
However, once the bias term was set to Eq. \ref{eqn:BiasInitialization}, the output distribution variance produced better results, as depicted in Figure \ref{fig:HistVarianceActivationInit}. This demonstrated that the forward variance output appropriately became one, illustrated in Figure. \ref{fig:VarianceActivationInit}.

\begin{figure}
    \centering
    \includegraphics[width=0.9\columnwidth]{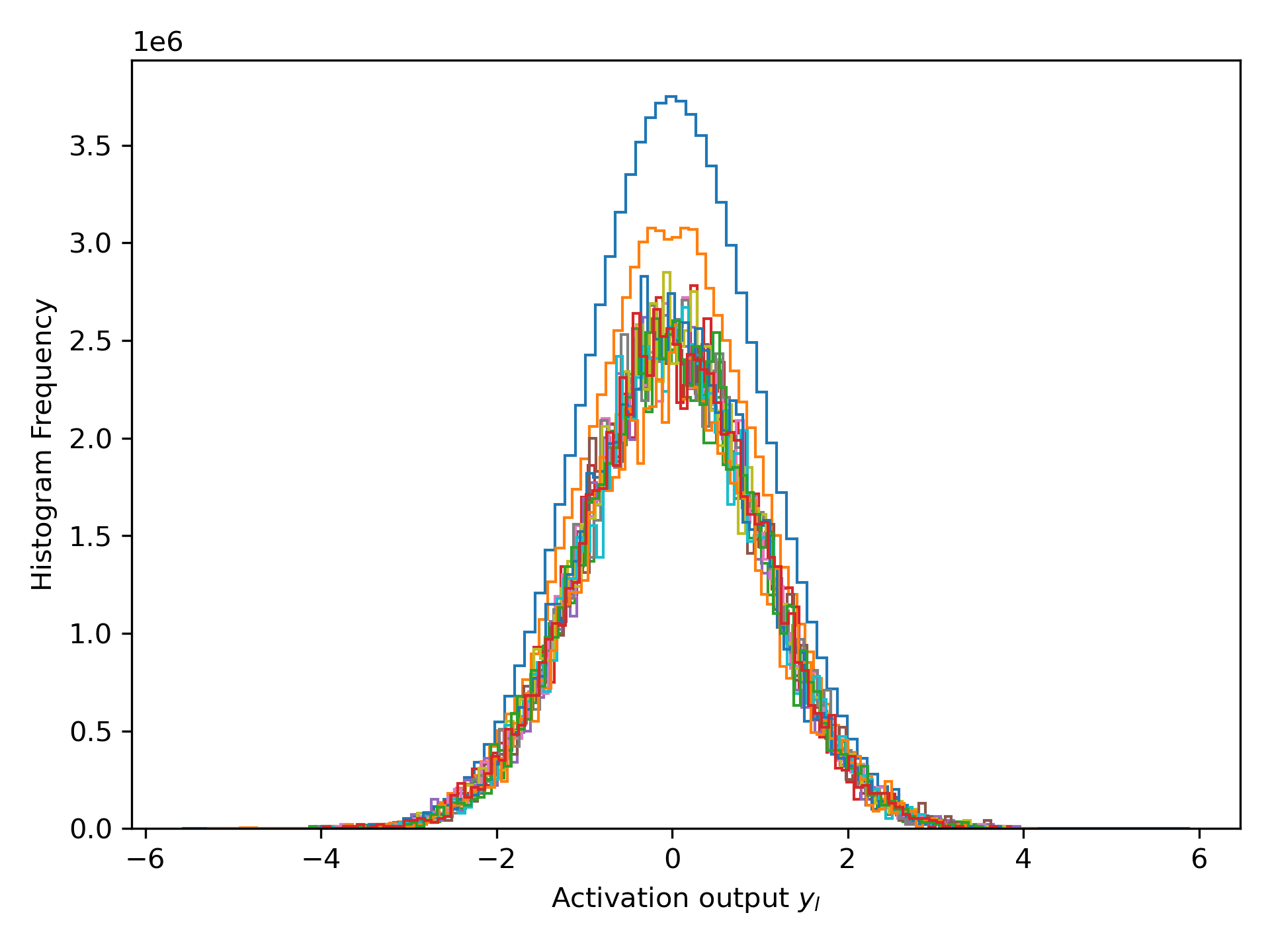}
    \caption{Forward Layer Activation Output with Bias}
    \label{fig:HistVarianceActivationInit}
\end{figure}

\begin{figure}
    \centering
    \includegraphics[width=0.9\columnwidth]{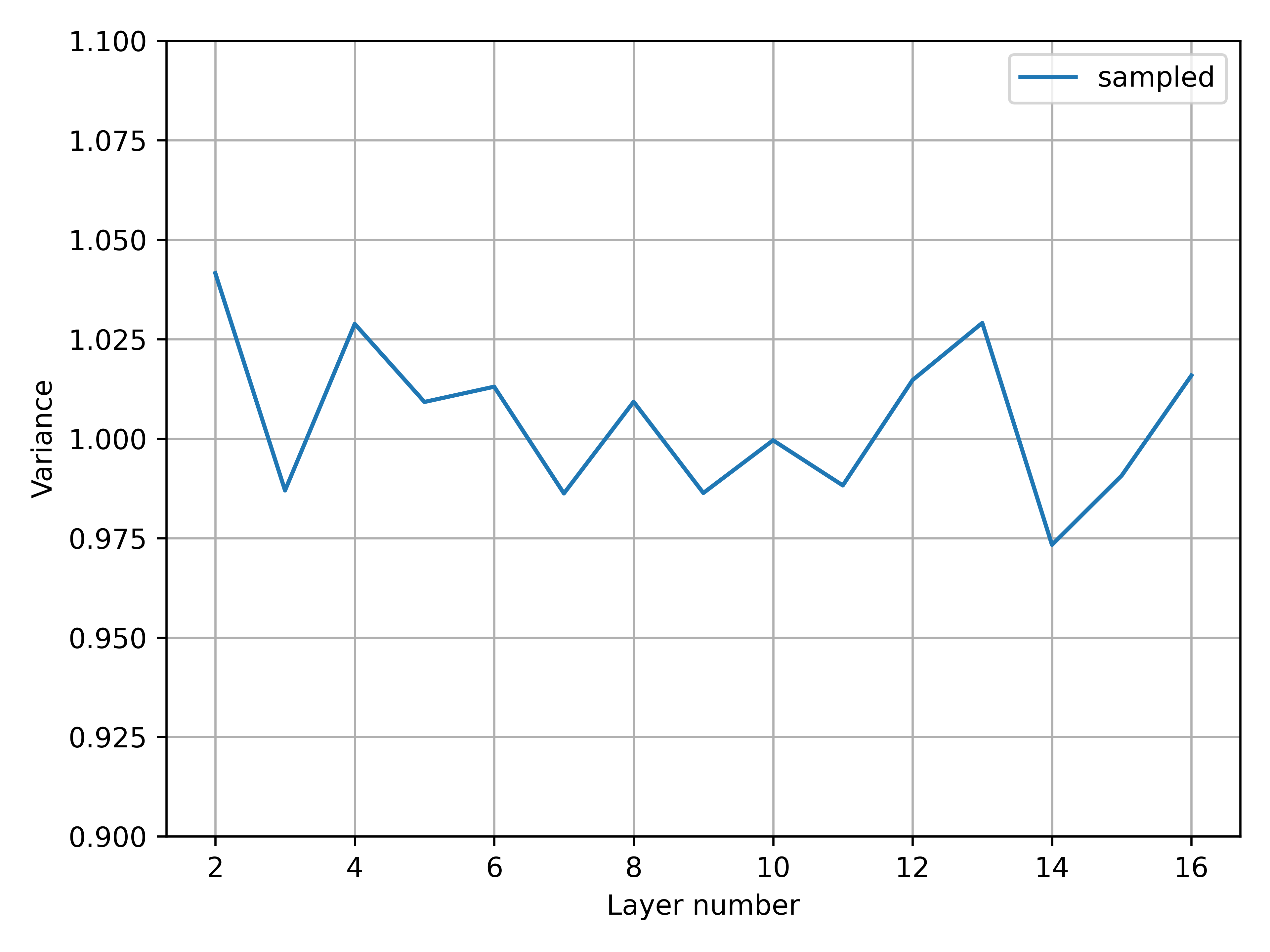}
    \caption{Forward Layer Activation Output Variances with Bias}
    \label{fig:VarianceActivationInit}
\end{figure}

\subsection{Backward-propagation}

Backward propagation was very similar to forward propagation. This time, since the bias term was removed due to the gradient, the output layer distribution formula yielded the same result as in Kaiming's paper.

Instead of the previous forward-propagation equation, the layer was rearranged to:
\begin{align}
    \Delta \boldsymbol{x_l} &= \Tilde{W}_l  \Delta \boldsymbol{y_l} . 
\end{align}
Where $\Delta \boldsymbol{x_l}$ and $\Delta \boldsymbol{y_l}$ denote the gradients $\frac{\partial \mathcal{E}}{\partial \boldsymbol{x}}$ and $\frac{\partial \mathcal{E}}{\partial \boldsymbol{y}}$  respectively.
\begin{theorem}
    If the activation is a ReLU, the gradient of the feedforward network's layers will follow the recursive definition \citet{He2015}:
    \begin{align}
        \Var{\Delta x_2} &= \Var{\Delta x_{L + 1}}\left(\prod_{l = 2}^L \frac{1}{2} \hat{n}_l \Var{w_l} \right).
    \end{align}
\end{theorem}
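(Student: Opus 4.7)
The plan is to mirror the forward-pass derivation, leveraging the symmetry lemma $\Proba{y_l>0}=\tfrac{1}{2}$ established earlier. First I would write down the one-step gradient recursion. Since $\boldsymbol{y}_l = W_l T_l^{-1/2}\boldsymbol{x}_l + \boldsymbol{b}_l$ and $\boldsymbol{x}_{l+1}=\sigma(\boldsymbol{y}_l)$, the chain rule gives $\Delta \boldsymbol{y}_l = \sigma'(\boldsymbol{y}_l)\odot\Delta \boldsymbol{x}_{l+1}$ and $\Delta \boldsymbol{x}_l = \tilde{W}_l^{\top}\Delta \boldsymbol{y}_l$, where $\tilde{W}_l = W_l T_l^{-1/2}$ and $\sigma'(y)=\mathbbm{1}[y>0]$ is the ReLU indicator.

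Second, I would invoke the standard Kaiming-style independence assumptions, so that the entries of $\tilde{W}_l$ are mutually independent, zero-mean, symmetric about zero, and independent of $\Delta \boldsymbol{y}_l$. Under these, expanding componentwise as in the forward derivation yields
\begin{align*}
\Var{\Delta x_l} = \hat{n}_l\,\Var{\tilde{w}_l}\,\E{(\Delta y_l)^2},
\end{align*}
where $\hat{n}_l$ is the backward fan dimension of $\tilde{W}_l^{\top}$ (the number of columns of $\tilde{W}_l$).

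Third, I would evaluate $\E{(\Delta y_l)^2}$. Inductively $\Delta \boldsymbol{x}_{l+1}$ depends only on weights at layers $\geq l+1$ and is thus independent of $W_l$, which shapes $\boldsymbol{y}_l$; assuming $\Delta \boldsymbol{x}_{l+1}$ is itself zero-mean and symmetric about zero, factoring by independence and applying the $\Proba{y_l>0}=\tfrac{1}{2}$ lemma gives
\begin{align*}
\E{(\Delta y_l)^2} = \E{\mathbbm{1}[y_l>0]}\,\E{(\Delta x_{l+1})^2} = \tfrac{1}{2}\,\Var{\Delta x_{l+1}}.
\end{align*}
Combining the two equations produces the one-step recursion $\Var{\Delta x_l} = \tfrac{1}{2}\hat{n}_l\Var{w_l}\Var{\Delta x_{l+1}}$, and telescoping from $l=L$ down to $l=2$ yields the stated product.

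The hard part will be cleanly propagating the zero-mean, symmetric-about-zero property of $\Delta \boldsymbol{x}_l$ backward through the recursion, and defending the independence between $\mathbbm{1}[y_l>0]$ and $\Delta \boldsymbol{x}_{l+1}$; both are the classical (mildly heuristic) Kaiming assumptions. The symmetry property transfers by induction because a linear map with zero-mean symmetric coefficients applied to a zero-mean symmetric input preserves zero-mean symmetry, and the base case is the output gradient, which is assumed symmetric. Finally, one should observe that $\Var{w_l}$ here is the parameterized variance $\Var{\tilde{w}_l}$ computed in Section \ref{sec:exactBound}; because the exact and upper-bound expressions are symmetric in the row and column indices of $W_l$ once $q_i=1$, the same value of $\Var{w_l}$ applies to both the forward and backward passes.
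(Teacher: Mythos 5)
Your derivation is correct and is essentially the argument the paper relies on: the paper states this theorem without its own proof, deferring to He et al.\ (2015), and your steps are precisely that standard backward-pass computation (one-step recursion $\Var{\Delta x_l}=\hat{n}_l\Var{w_l}\E{(\Delta y_l)^2}$, the $\Proba{y_l>0}=\tfrac{1}{2}$ symmetry lemma supplying the factor $\tfrac{1}{2}$, then telescoping), adapted to the parameterized weights $\tilde{W}_l=W_lT_l^{-1/2}$. One small correction to your closing remark: the exact expression for $\Var{w_l}$ in Section~\ref{sec:exactBound} is \emph{not} symmetric in $d_l$ and $n_l$ (the paper itself substitutes the hatted dimensions $\hat{n}_l,\hat{d}_l$ into the backward formula), but this does not affect the validity of the recursion as stated.
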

Given the work previously done in \ref{eqn:exactVariance}, $\Var{w_l}$ was a known quantity and substituted to generate the expected gradient distribution variance:

\begin{align}
     \Var{\Delta x_2} &= \Var{\Delta x_{L + 1}}\left(\prod_{l = 2}^L \frac{\hat{n}_l}{2 \hat{d}_l + 2 (\hat{n}_l -1) \frac{2 \Gamma \left(\frac{\hat{d}_l+1}{2}\right)}{\sqrt{\pi } \Gamma \left(\frac{\hat{d}_l}{2}\right)} } \right).
\end{align}

Which was \ref{eqn:decayEqn}, but with gradient-based parameters, the layer gradient variance was demonstrated in Figure \ref{fig:GradientVarianceActivationGroup}. 

\begin{figure}
    \centering
    \includegraphics[width=0.9\columnwidth]{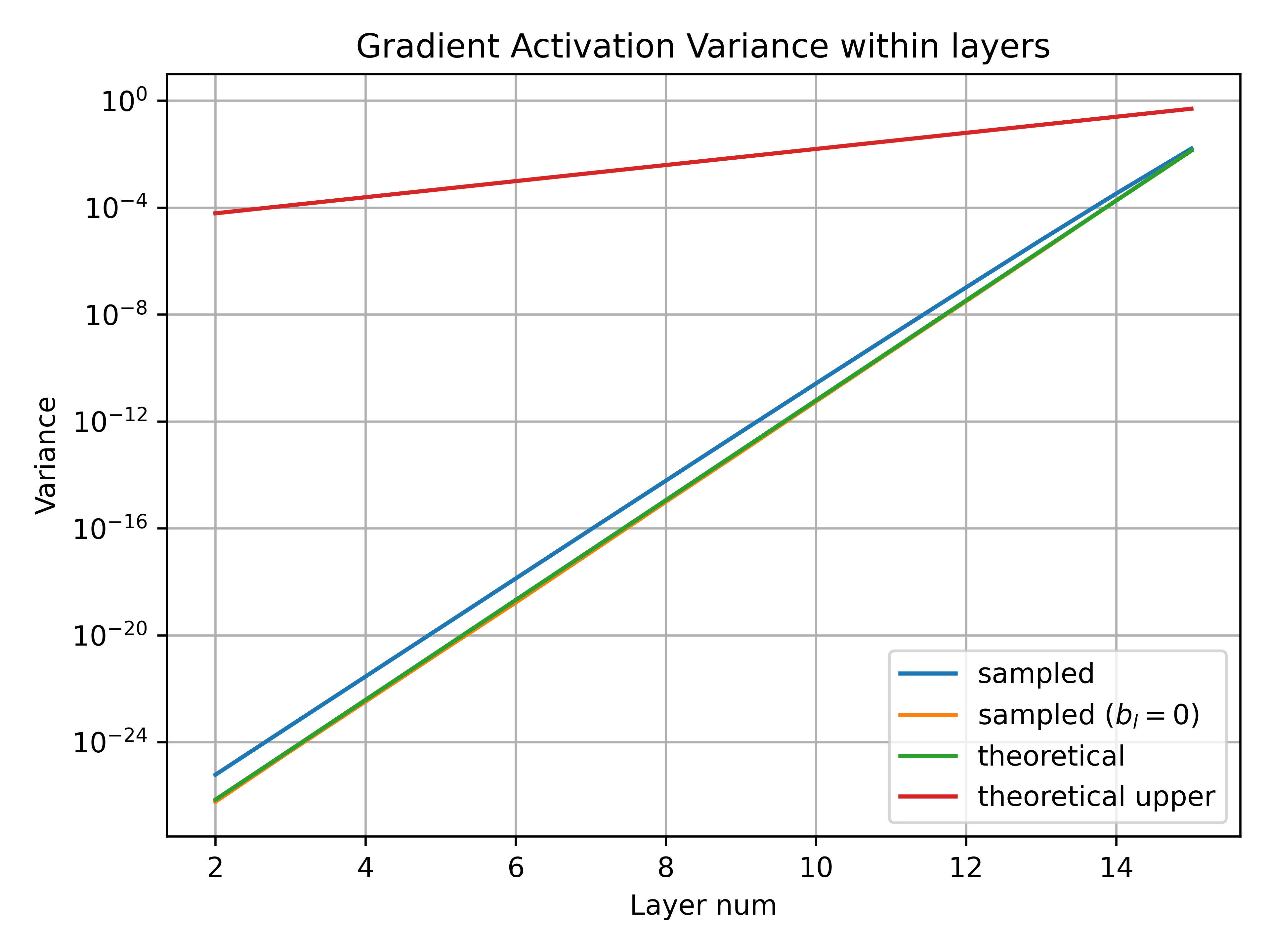}
    \caption{Backwards Layer Activation Output Variances}
    \label{fig:GradientVarianceActivationGroup}
\end{figure}

\par
As noted, the bias term was no longer included, implying that the output distribution would not change regardless of the bias term. This issue was verified by the output gradient distribution of a simple feedforward network with the weight matrix of size of $2048 \times 2048$ in Figures \ref{fig:HistGradVarianceActivation} and \ref{fig:HistGradVarianceActivationB}. 

\begin{figure}
        \centering
    \includegraphics[width=0.9\columnwidth]{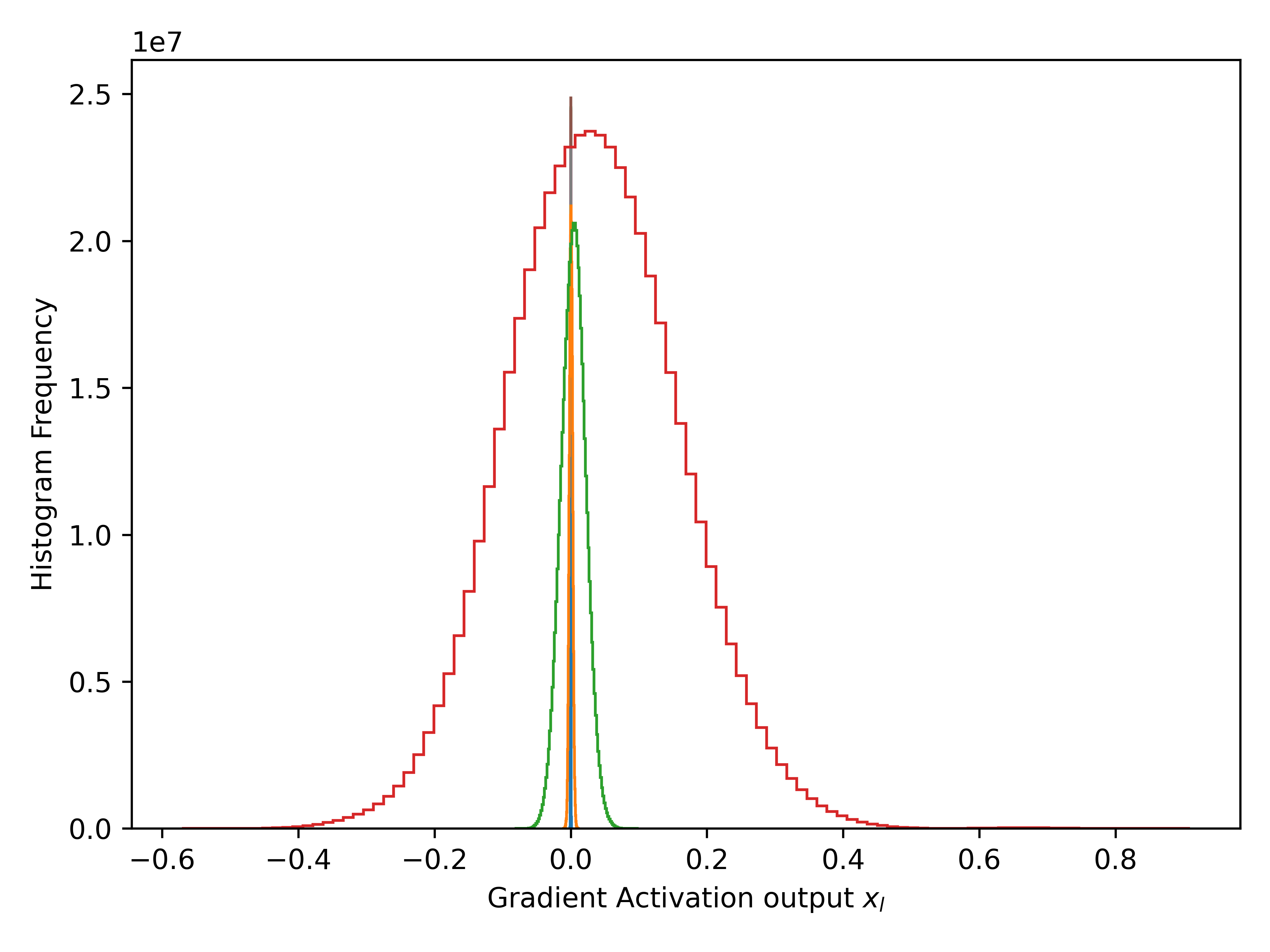}
    \caption{Backwards Layer Activation Output}
    \label{fig:HistGradVarianceActivation}
\end{figure}

\begin{figure}
        \centering
        \includegraphics[width=0.9\columnwidth]{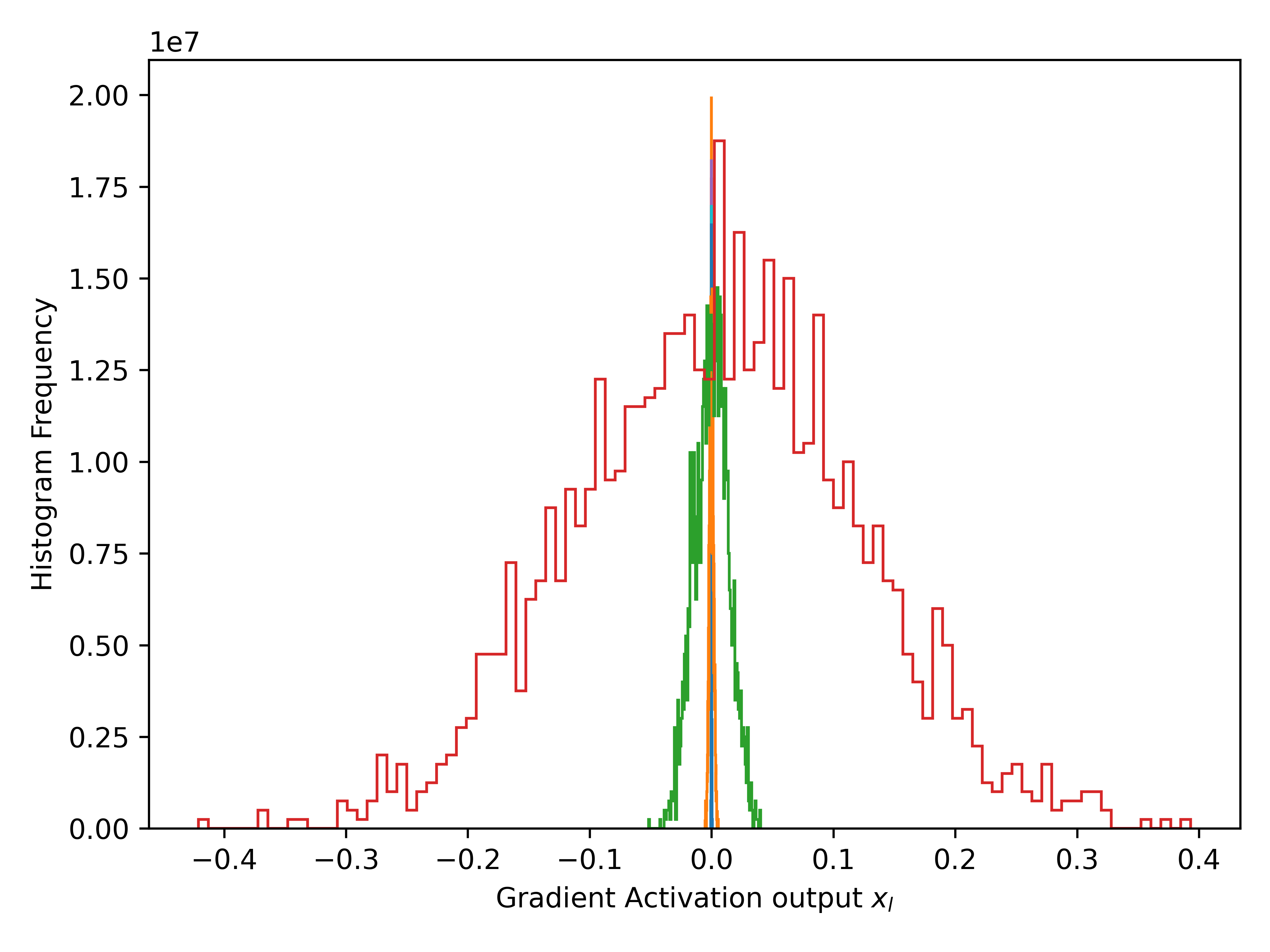}
        \caption{Backwards Layer Activation Output with Bias}
        \label{fig:HistGradVarianceActivationB}
    \end{figure}

So, although setting the bias term for forward propagation helped normalize the output variance, backward propagation cannot be corrected by modifying the network's initialization.

\section{Generalized Normal Initialization}

Interestingly, using a uniform distribution to initialize the system yields the same result as using the normal distribution with output weight variance. To demonstrate this result, the transformed weight variance was formulated using a Generalized Normal Distribution (GND) \citet{Nadarajah2005} for the weight matrix initialization instead of the normal distribution. This was because, from the Generalized Normal Distribution, it is possible to extract multiple other distributions, including but not limited to the Laplace distribution when the shape parameter $\alpha$,  $\alpha=1$, the normal distributions when $\alpha = 2$, and the uniform distribution when $\alpha = \infty$. The PDF and the cumulative distribution function (CDF) are described below \citet{Soury2012, Soury2015}:
\begin{align}
    p(x; \mu, \sigma, \alpha)  &= \frac{\alpha \Lambda  }{2  \Gamma \left(\frac{1}{\alpha }\right)} e^{-\Lambda^\alpha | x - \mu|^{\alpha }}, \label{eqn:GND} \\
    \Lambda &= \frac{\Lambda_0}{\sigma} =  \frac{1}{\sigma} \sqrt{\frac{\Gamma(3 /  \alpha)}{\Gamma(1 /  \alpha)}}, \\
    \boldsymbol{\Phi}(x; \mu, \sigma, \alpha) 
    &= \mathbbm{1}_{x -\mu \ge 0} -  \frac{\text{sign}(x - \mu)}{2 } \nonumber \\ & \qquad \qquad \qquad Q\left(\frac{1}{\alpha}, \Lambda_0^\alpha \left(\frac{\abs{x-\mu}}{\sigma} \right)^\alpha\right),
\end{align}
where $Q(a, z) = \frac{\Gamma(a, z)}{\Gamma(a)}$ is the regularized incomplete upper gamma function, where $\Gamma(a, z)$ is the upper incomplete gamma function,
\begin{align}
    \Gamma(a, z) &= \int_z^\infty t^{a - 1}e^{-t}dt,
\end{align} 
and $\mathbbm{1}_{x> 0}$ is the indicator function. Given that it was previously assumed that $\mu = 0$, this assumption will carry through in the following derivations.

The inverse CDF of this distribution, useful when performing efficient sampling of the distribution, is defined as:

\begin{align}
    \boldsymbol{\Phi}^{-1}(x; \mu, \sigma, \alpha) &= \mu +        \frac{\text{sign}(x - \frac{1}{2})}{\Lambda}Q^{-\frac{1}{\alpha}}\left(\frac{1}{\alpha}, 1-\left|2x-1\right| \right)
\end{align}

where $Q^{-1}(a, s)$ represents the inverse of the regularized incomplete gamma function \citet{DiDonato1986}.

\subsection{Squared Generalized Normal Distribution}

Given the previous distribution in Eq. \ref{eqn:variation_w}, $w_i^2$ needed to be derived. To derive this $\boldsymbol{\Phi}$ was denoted as the CDF of $w_i$, i.e., $\boldsymbol{\Phi}(z) = P(Z < z) = F_Z(z)$, the CDF of $Y = Z^2$ first needed to be calculated, in terms of $\boldsymbol{\Phi}$.
\begin{align}
    F_Y(y)= P(Y \leq y) &= P(Z^2 \leq y)  \nonumber \\
    &= P(-\sqrt{y} \leq Z \leq \sqrt{y}), \text{for y $\ge 0$} \nonumber  \\
    &= \boldsymbol{\Phi}\left(\sqrt{y}\right) - \boldsymbol{\Phi}\left(-\sqrt{y}\right).
\end{align}
To compute the PDF, the CDF's derivative needed to be calculated.
\begin{align}
        f_Y(y) &= \frac{\delta}{\delta y}\left[F_Y(y)\right] = \frac{\delta}{\delta y}\left[\boldsymbol{\Phi}\left(\sqrt{y}\right) - \boldsymbol{\Phi}\left(-\sqrt{y}\right)\right] \nonumber  \\
        &= \frac{\delta}{\delta y}\boldsymbol{\Phi}\left(\sqrt{y}\right) - \frac{\delta}{\delta y}\boldsymbol{\Phi}\left(-\sqrt{y}\right) \nonumber  \\
        &= \varphi\left(\sqrt{y}\right) \frac{1}{2\sqrt{y}} + \varphi\left(-\sqrt{y}\right) \frac{1}{2\sqrt{y}} \nonumber  \\
        &= \frac{1}{2\sqrt{y}} \left(\frac{\alpha \Lambda  e^{-\Lambda^\alpha | \sqrt{y}|^{\alpha }} }{2  \Gamma \left(\frac{1}{\alpha }\right)} + \frac{\alpha \Lambda  e^{-\Lambda^\alpha | -\sqrt{y}|^{\alpha }} }{2  \Gamma \left(\frac{1}{\alpha }\right)} \right)\nonumber  \\
        &= \frac{1}{\sqrt{y}} \frac{\alpha \Lambda  e^{-\Lambda^\alpha | \sqrt{y}|^{\alpha }} }{2  \Gamma \left(\frac{1}{\alpha }\right)} .
\end{align}
The moment-generating function can be derived as:
\begin{align}
    \E{Y^n} &= \int_{0}^\infty y^n f_Y(y) dy \\
    &=\sigma ^{2 n} \Gamma \left(\frac{1}{\alpha }\right)^{n-1} \Gamma
   \left(\frac{3}{\alpha }\right)^{-n} \Gamma \left(\frac{2 n+1}{\alpha
   }\right), \label{eqn:generalized_expectation}
\end{align}
which generated the mean of $\E{w_i^2} = \sigma^2$ and variance $\Var{w_i^2} = \sigma ^4 
 \left(\frac{\Gamma \left(\frac{1}{\alpha }\right) \Gamma \left(\frac{5}{\alpha }\right)}{\Gamma \left(\frac{3}{\alpha }\right)^2} -1\right)$.
\par
Given that the distribution $w_i^2$ has thus been determined, it was also wished to determine what the distribution $\sum_{i=1,j\neq i}^{d_l}w_j^2$ represented.
\par
To find this distribution, the characteristic function (CF) of the Squared Generalized Normal Distribution (SGND) is used, as it is easier to perform the summation in the characteristic equation, which represents the product of characteristic functions, rather than convolving the PDFs.

\subsection{SGND Characteristic Function}

Let $\sigma, \alpha > 0$, and $Y$ be a random variable (RV) following a $SGND(0, \sigma, \mu)$

\begin{theorem}
    The CF of Y, $\E{e^{itY}}$, is given by
    \begin{align}
        \varphi(t; \sigma, \alpha) &= \frac{1}{\Gamma(\frac{1}{\alpha})}{\mathrm{H}}_{1,1}^{1,1}\left[-\iu \Lambda^2 t \left|\genfrac{}{}{0pt}{}{(1-\frac{1}{\alpha},\frac{2}{\alpha})}
{(0, 1)}\right.\right],
    \end{align}
    where $\mathrm{H}^{\cdot,\cdot}_{\cdot,\cdot}[\cdot]$ is the Fox H function (FHF) \citep[Eq. (1.1.1)]{Kilbas2004}.
\end{theorem}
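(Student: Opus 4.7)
The plan is to compute $\varphi(t)=\E{e^{\iu tY}}=\int_0^\infty e^{\iu ty}f_Y(y)\,dy$ directly from the SGND density derived above, reduce it to a single power series in $t$ via a monomial substitution, and then recognize that series as the residue-sum expansion of the stated Fox H function. The advantage of working from the SGND density (rather than trying to sum a convolution or invert a characteristic function of $w_i$) is that the only non-elementary factor in $f_Y$ is the stretched exponential $e^{-\Lambda^\alpha y^{\alpha/2}}$, which collapses to a clean $e^{-u}$ after a single change of variables.

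First I would apply the substitution $u=\Lambda^\alpha y^{\alpha/2}$, which maps $y\mapsto\Lambda^{-2}u^{2/\alpha}$ and, together with the Jacobian and the $y^{-1/2}$ factor, reduces $f_Y(y)\,dy$ exactly to $\Gamma(1/\alpha)^{-1}u^{1/\alpha-1}e^{-u}\,du$. This leaves
\begin{equation}
\varphi(t)=\frac{1}{\Gamma(1/\alpha)}\int_0^\infty e^{\iu t\Lambda^{-2}u^{2/\alpha}}e^{-u}u^{1/\alpha-1}\,du.
\end{equation}
Expanding the oscillatory factor in powers of $t$ and integrating each term against $e^{-u}$ produces a Gamma-function series whose $k$-th coefficient is $\Gamma((2k+1)/\alpha)/k!$ times a power of $\iu t$ and $\Lambda$. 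I would then match this against the Mellin-Barnes definition of $H^{1,1}_{1,1}$ with upper parameters $(1-1/\alpha,2/\alpha)$ and lower parameters $(0,1)$, whose integrand is $\Gamma(s)\Gamma(1/\alpha-2s/\alpha)\,z^{-s}$: closing the contour around the simple poles of $\Gamma(s)$ at $s=-k$ produces residues $(-1)^k\Gamma((2k+1)/\alpha)z^k/k!$, which agrees termwise with the CF series once $z$ is identified with the specific multiple of $\iu t$ that appears in the theorem statement.

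The main obstacle is that the termwise integration is only formally valid: when $\alpha<2$ the coefficients $\Gamma((2k+1)/\alpha)/k!$ grow faster than geometrically, so Fubini fails for the naive Taylor expansion and the resulting power series is only asymptotic in $t$. To make the argument rigorous I would bypass the series and work in Mellin-Barnes form from the outset, by writing $e^{-u}=\frac{1}{2\pi\iu}\int_{C}\Gamma(s)u^{-s}\,ds$ along a vertical contour in the strip where the integral converges, swapping the order of integration (now legitimate by absolute convergence), performing the inner $u$-integral as an elementary Gamma Mellin transform of the remaining $e^{\iu t\Lambda^{-2}u^{2/\alpha}}u^{1/\alpha-1}$ piece, and identifying the resulting contour integral with the defining Mellin-Barnes representation of $H^{1,1}_{1,1}$. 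This route also fixes the branch of the complex power in the H-argument, which is the most delicate bookkeeping step and which the formal residue calculation does not settle on its own.
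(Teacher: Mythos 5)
Your route is sound and genuinely different in presentation from the paper's. The paper never changes variables: it rewrites each of the two exponential factors $\frac{1}{\sqrt{y}}e^{-\Lambda^\alpha y^{\alpha/2}}$ and $e^{\iu t y}$ as an elementary $\mathrm{H}^{1,0}_{0,1}$ via \cite[Eq.\ (2.9.4)]{Kilbas2004} and then invokes the tabulated Mellin-convolution identity \cite[Eq.\ (2.8.4)]{Kilbas2004} for the integral of a product of two Fox H functions, which outputs the $\mathrm{H}^{1,1}_{1,1}$ directly. You instead normalize the density first ($u=\Lambda^\alpha y^{\alpha/2}$ turns $f_Y(y)\,dy$ into the Gamma density $\Gamma(1/\alpha)^{-1}u^{1/\alpha-1}e^{-u}\,du$, which is a nice check of normalization in passing) and then build the Mellin--Barnes representation by hand. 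Mathematically the two are the same computation — the Kilbas identity is itself proved by exactly the contour manipulation you describe — but your version is self-contained, makes the residue series $\sum_k \Gamma(\tfrac{2k+1}{\alpha})(\iu t\Lambda^{-2})^k/k!$ visible (and hence the sanity check against $(1-2\iu\sigma^2 t)^{-1/2}$ at $\alpha=2$), and correctly identifies that the H-argument should be $-\iu t\Lambda^{-2}$; the paper's displayed $-\iu\Lambda^2 t$ appears to be a reciprocal-argument slip, since only $t/\Lambda^2$ is dimensionless.

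One step in your ``rigorous'' branch would fail as written: after inserting $e^{-u}=\frac{1}{2\pi\iu}\int_C\Gamma(s)u^{-s}\,ds$, you justify exchanging the $s$- and $u$-integrals ``by absolute convergence,'' but the double integral is \emph{not} absolutely convergent — the inner integral becomes $\int_0^\infty e^{\iu t\Lambda^{-2}u^{2/\alpha}}u^{1/\alpha-1-s}\,du$, whose integrand has modulus $u^{1/\alpha-1-\Re s}$, which is non-integrable at one end or the other for every $\Re s$. The exchange must be justified differently: either Mellin-represent the oscillatory factor $e^{\iu t\Lambda^{-2}u^{2/\alpha}}$ instead, so that the inner integral runs against the damped kernel $e^{-u}u^{1/\alpha-1}$ and Fubini applies on the strip $0<\Re z<1$, or regularize by replacing $t$ with $t+\iu\epsilon$ and pass to the limit. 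With that repair the argument closes correctly and reproduces the stated $\mathrm{H}^{1,1}_{1,1}$ with parameters $(1-\tfrac{1}{\alpha},\tfrac{2}{\alpha})$ and $(0,1)$.
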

\begin{proof}
    Starting with the definition of the CF and the PDF of the SGND, the CF is defined as
    \begin{align}
        \varphi(t; \sigma, \alpha) &=  \E{e^{itY}} = \int_{\mathbb{R}} e^{\iu ty}f_Y(y)dy \nonumber \\
        &= \frac{\alpha \Lambda}{2  \Gamma \left(\frac{1}{\alpha }\right)} \int_{\mathbb{R_+}}\frac{1}{\sqrt{y}}   e^{-\Lambda^\alpha y^{\frac{\alpha}{2} } } e^{\iu ty} dy.
    \end{align}
    We can find alternative expressions to the exponentials in terms of FHF as \citep[Eq. (2.9.4)]{Kilbas2004}, 
    \begin{align}
        \frac{1}{\beta}y^{\frac{b}{\beta}}e^{-y^{\frac{1}{\beta}}} &= {\mathrm{H}}_{0,1}^{1,0}\left[y\left|\genfrac{}{}{0pt}{}{\emptyline}
{(b, \beta)}\right.\right],
    \end{align}
    which allows us to rewrite the CF integral as the product of two FHF 
    \begin{align}
        \frac{1}{\sqrt{y}}   e^{-\Lambda^\alpha y^{\frac{\alpha}{2} } } &= \frac{2 \Lambda}{\alpha}{\mathrm{H}}_{0,1}^{1,0}\left[\Lambda^2 y\left|\genfrac{}{}{0pt}{}{\emptyline}
{(-\frac{1}{\alpha}, \frac{2}{\alpha})}\right.\right], \\
e^{\iu ty} &= {\mathrm{H}}_{0,1}^{1,0}\left[-\iu ty\left|\genfrac{}{}{0pt}{}{\emptyline}
{(0, 1)}\right.\right],
    \end{align}
    over the positive real numbers, which enables the use of the integral identity defined in \citep[Eq. (2.8.4)]{Kilbas2004}. As a result, the CF is rewritten as
     \begin{align}
        \varphi(t; \sigma, \alpha) &= \frac{\alpha \Lambda}{2  \Gamma \left(\frac{1}{\alpha }\right)}  \frac{2 \Lambda}{\alpha} \int_{0}^{\infty}{\mathrm{H}}_{0,1}^{1,0}\left[\Lambda^2 y\left|\genfrac{}{}{0pt}{}{\emptyline}
{(-\frac{1}{\alpha}, \frac{2}{\alpha})}\right.\right]  \times \nonumber \\
& \quad\quad \quad \quad \quad \quad \quad \quad {\mathrm{H}}_{0,1}^{1,0}\left[-\iu ty\left|\genfrac{}{}{0pt}{}{\emptyline}
{(0, 1)}\right.\right] dy,  \nonumber \\
&= \frac{\Lambda^2}{  \Gamma \left(\frac{1}{\alpha }\right)}  \frac{1}{\Lambda^2} {\mathrm{H}}_{1,1}^{1,1}\left[-\iu \Lambda^2 t \left|\genfrac{}{}{0pt}{}{(1-\frac{1}{\alpha},\frac{2}{\alpha})}
{(0, 1)}\right.\right].
    \end{align}
which completes the derivation of the CF.
\end{proof}
\subsection{Moment Generating Function}
The moment generating function (MGF) can be directly concluded from the Cf by the relation $M(t; \sigma, \alpha ) = \varphi(-\iu t; \sigma, \alpha)$ such that,
\begin{align}
   M(t; \sigma, \alpha ) &= \frac{1}{  \Gamma \left(\frac{1}{\alpha }\right)}   {\mathrm{H}}_{1,1}^{1,1}\left[ \Lambda^2 t \left|\genfrac{}{}{0pt}{}{(1-\frac{1}{\alpha},\frac{2}{\alpha})}
{(0, 1)}\right.\right].
\end{align}
\subsection{Sum of independent SGND random variables}

While it would be interesting to be able to compute the PDF of the generalized sum of $n$ independent SGND random variables and its different characteristics, this would involve an $n$-dimensional Mellin-Barnes integration \citep[Eq. 1.1.2]{Kilbas2004}. These integrations do not have an explicit solution, except for a very limited parameterization of the FHF. Given the CF of a function, the sum of independent SGND random variables (with equal parameters) would be defined as
\begin{align}
    \varphi_n(t; \sigma, \alpha) := \varphi(t; \sigma, \alpha)^n,
\end{align}
with the  $n$-dimensional Mellin-Barnes integral being represented as, 
\begin{align}
    \varphi_n(t; \sigma, \alpha) &= \frac{1}{(2 \pi \iu)^n} \int_{\mathcal{L}^n} \prod_{j = 1}^n \frac{\Gamma(s_j)\Gamma(\frac{1}{\alpha} - \frac{2}{\alpha}s_j)}{\Gamma(1 - \frac{1}{\alpha} + \frac{2}{\alpha}s_j)\Gamma(1 - s_j)} \times \nonumber \\
    & \quad\quad\quad\quad\quad\quad(-i \Lambda^2 t)^{-s_j} d {s_j}.
\end{align}
Given that the derivation of a simplification of is not feasible, taking the inverse Laplace transform of the CF to retrieve the PDF is also not feasible. Instead, given that this paper is only interested in the expectation of this distribution, this would be the $n$-sum of the expectation of a single SGND (Eq. \ref{eqn:generalized_expectation}), 
\begin{align}
    \E{\sum^n_{j = 1}w_j^2} = \sum^n_{j = 1}\E{w_j^2} = n \sigma^2
\end{align}
\subsection{Sum of the absolute value of the product of independent GND}

Finally, the last term that is required to be computed is the sum of the absolute value of the product-independent GND $\sum_{j = 1}^{n_l}\left|\sum_{k = 1}^{d_l - 1}w_{b, k,j} w_{c, k,j} \right|$, where $w_{b, k,j} w_{c, k,j} \sim GND(0, \sigma, \alpha)$. 
\begin{theorem}
    The PDF of the product of two independent IID random variables  $Z = X Y$, where $X, Y \sim GND(0, \sigma, \alpha)$ is annotated as the PGND distribution, is defined as,
    \begin{align}
        f_Z(z) = \frac{\alpha \Lambda^2}{\Gamma(\frac{1}{\alpha})^2}  K_0 (2 \Lambda^\alpha |z|^{\frac{\alpha}{2}}) \label{eqn:productGND}
    \end{align}
\end{theorem}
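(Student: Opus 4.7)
The plan is to exploit the symmetry of the GND and then reduce the product distribution to a one-dimensional integral that collapses to a modified Bessel function. Since both $X, Y \sim GND(0, \sigma, \alpha)$ are symmetric about zero, I would first write $X = \epsilon_X |X|$ and $Y = \epsilon_Y |Y|$ with $\epsilon_X, \epsilon_Y$ independent Rademacher signs independent of the magnitudes. Then $|Z| = |X||Y|$ and the sign $\epsilon_X \epsilon_Y$ is again uniform on $\{-1,+1\}$ and independent of $|Z|$, so $f_Z(z) = \tfrac{1}{2} f_{|Z|}(|z|)$. This reduces the task to computing the PDF of the product of two IID half-GND random variables on $[0,\infty)$ with density $f_{|X|}(x) = \tfrac{\alpha \Lambda}{\Gamma(1/\alpha)} e^{-\Lambda^\alpha x^\alpha}$.

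Next, I would compute $f_{|Z|}$ via the standard product-of-positive-random-variables formula
\begin{align}
    f_{|Z|}(z) = \int_0^\infty \frac{1}{x} f_{|X|}(x)\, f_{|Y|}(z/x)\, dx,
\end{align}
and apply the substitution $u = \Lambda^\alpha x^\alpha$, for which $dx/x = du/(\alpha u)$ and the two exponents combine into $-u - \Lambda^{2\alpha} z^\alpha / u$. The integral then collapses to the canonical Bessel integral $\int_0^\infty u^{-1} e^{-u - a/u}\, du = 2 K_0(2\sqrt{a})$ evaluated at $a = \Lambda^{2\alpha} z^\alpha$, producing $2 K_0(2 \Lambda^\alpha z^{\alpha/2})$. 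Collecting the prefactors $\alpha^2 \Lambda^2 / \Gamma(1/\alpha)^2$ from the two densities, together with the $1/\alpha$ from the Jacobian, yields $f_{|Z|}(z) = \tfrac{2\alpha \Lambda^2}{\Gamma(1/\alpha)^2} K_0(2 \Lambda^\alpha z^{\alpha/2})$, and the sign-symmetry reduction from the first step delivers the claimed formula \ref{eqn:productGND}.

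An alternative, closer to the Fox H machinery already used in the paper, would be a Mellin-transform argument: the Mellin transform of $f_{|X|}$ evaluates to $\Lambda^{1-s} \Gamma(s/\alpha)/\Gamma(1/\alpha)$, so the Mellin transform of $f_{|Z|}$ is its square, and inverting using the Mellin-Barnes representation $K_0(2\sqrt{u}) = \tfrac{1}{4\pi \iu}\int \Gamma(s)^2 u^{-s}\, ds$ after rescaling $s \mapsto \alpha s$ recovers the same Bessel expression. This route has the advantage of fitting the notation of the previous characteristic-function theorem, since the same half-GND Mellin transform can be re-packaged as an $\rm{H}^{1,0}_{0,1}$ evaluation.

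The main obstacle will be the algebraic bookkeeping of the $\Lambda$-prefactors and the change of variables needed to put the integrand into the canonical Bessel form; once $u = \Lambda^\alpha x^\alpha$ is performed, the shape parameter $\alpha$ vanishes from the exponent and only rescales the Bessel argument, which is precisely what allows the formula to interpolate cleanly between the Laplace ($\alpha=1$), Normal ($\alpha=2$) and Uniform ($\alpha \to \infty$) regimes. As a sanity check, substituting $\alpha = 2$ gives $\Lambda^2 = 1/(2\sigma^2)$ and recovers $K_0(|z|/\sigma^2)/(\pi \sigma^2)$, which matches the Normal Product Distribution density already cited in the paper, confirming that the prefactor constants are correctly tracked.
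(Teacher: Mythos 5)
Your proposal is correct and follows essentially the same route as the paper: both compute the product-density integral $\int f_X(x) f_X(z/x)\,\frac{dx}{|x|}$, reduce to the positive half-line by symmetry, and collapse the resulting $\int_0^\infty x^{-1} e^{-\Lambda^\alpha(x^\alpha + |z|^\alpha x^{-\alpha})}\,dx$ to $K_0$ (the paper cites the tabulated integral with $\nu=0$, $p=\alpha$ directly, whereas you perform the substitution $u=\Lambda^\alpha x^\alpha$ to reach the canonical form $\int_0^\infty u^{-1}e^{-u-a/u}\,du = 2K_0(2\sqrt{a})$ — the same computation). Your $\alpha=2$ sanity check against the cited Normal Product Distribution is a nice addition the paper does not include.
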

\begin{proof}
    For two independent random variables with PDF $f_X(x)$, the PDF of the product can be defined as:
    \begin{align}
        f_Z(z) = \int_{-\infty}^\infty f_X(x) f_X\left(\frac{z}{x}\right) \frac{1}{|x|} dx.
    \end{align}
    Given that in this case the $f_X(x)$ is even $p(x) = p(-x)$, it is possible to rewrite the integral as
    \begin{align}
        f_Z(z) = 2 \int_{0}^\infty f_X(x) f_X\left(\frac{z}{x}\right) \frac{1}{|x|} dx.
    \end{align}
    The definition of $f_X(x)$ from Eq. \ref{eqn:GND} is substituted in $f_Z(z)$ and since $x >0$, $|\frac{z}{x}| = \frac{|z|}{x}$ as such
    \begin{align}
        f_Z(z) &= 2 \left(\frac{\alpha \Lambda  }{2  \Gamma \left(\frac{1}{\alpha }\right)} \right)^2 \int_{0}^\infty  e^{-\Lambda^\alpha x^{\alpha }}  e^{-\Lambda^\alpha \left(\frac{|z|}{x}\right)^{\alpha }} \frac{dx}{x}, \nonumber \\
        &= 2 \left(\frac{\alpha \Lambda  }{2  \Gamma \left(\frac{1}{\alpha }\right)} \right)^2 \int_{0}^\infty  \frac{1}{x} e^{-\Lambda^\alpha \left( x^{\alpha } +|z|^\alpha x^{-\alpha} \right)} dx.
    \end{align}
    Which has a known solution given the standard integral formula \citep[Eq. 3.478.4]{def_integrals} which states that for $\Re{\beta} >0, \Re \gamma > 0$,
    \begin{align}
        \int_0^\infty x^{\nu - 1} e^{-\beta x^{p} - \gamma x^{-p}} = \frac{2}{p} \left(\frac{\gamma}{\beta}\right)^{\frac{\nu}{2 p}} K_{\frac{\nu}{p}}\left(2 \sqrt{\beta \gamma}\right).
    \end{align}
    In this case we set $\nu = 0, \beta = \Lambda^\alpha, p = \alpha$ and $\gamma = \Lambda^\alpha |z|^\alpha$ and get the final answer that,
    \begin{align}
        f_Z(z) = \frac{\alpha \Lambda^2}{\Gamma(\frac{1}{\alpha})^2} K_0 (2 \Lambda^\alpha |z|^{\frac{\alpha}{2}})
    \end{align}
\end{proof}
\subsection{Moment generating function of the absolute value of the PGND}

Given the derived PDF of the product of IID GND distributions, to continue, it is desired to compute the general n sum of this product distribution. \ref{eqn:productGND}. However, to start, we compute the moments of its absolute value for later use.
\begin{theorem}[MGF of the Absolute PGND]
    If the $Z \sim PGND(\alpha, \sigma)$, then the moment generating function's absolute value is its Mellin transform, 
    \begin{align}
        \mathcal{M}\{f_Z\}(s) = \frac{\Lambda^{2 - 2s}}{ \Gamma\left(\frac{1}{\alpha}\right)^2} \Gamma\left(\frac{s}{\alpha}\right)^2 \label{eqn:APGNDMomentGenerating Function}
    \end{align}
\end{theorem}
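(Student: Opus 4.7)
The plan is to proceed directly from the definition of the Mellin transform, exploiting the symmetry of the PGND density and then a power substitution that brings the integral into a tabulated Bessel form. Since $f_Z$ from the previous theorem depends on $z$ only through $|z|^{\alpha/2}$, it is an even function, so the density of $|Z|$ on $[0,\infty)$ is $2 f_Z(z)$ and
\begin{align}
\mathcal{M}\{f_Z\}(s) &= 2\int_0^\infty z^{s-1} f_Z(z)\,dz \nonumber \\
&= \frac{2\alpha \Lambda^2}{\Gamma(1/\alpha)^2}\int_0^\infty z^{s-1} K_0\!\left(2\Lambda^\alpha z^{\alpha/2}\right) dz.
\end{align}

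Next I would perform the change of variables $u = z^{\alpha/2}$, giving $z = u^{2/\alpha}$, $dz = \tfrac{2}{\alpha}\,u^{2/\alpha-1}\,du$, and $z^{s-1} = u^{2(s-1)/\alpha}$. After combining the Jacobian with $z^{s-1}$, the integral collapses to the canonical Mellin form
\begin{align}
\int_0^\infty z^{s-1} K_0\!\left(2\Lambda^\alpha z^{\alpha/2}\right) dz = \frac{2}{\alpha}\int_0^\infty u^{2s/\alpha-1} K_0\!\left(2\Lambda^\alpha u\right) du.
\end{align}

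The last step is to invoke the standard tabulated identity (Gradshteyn–Ryzhik 6.561.16)
\begin{align}
\int_0^\infty u^{\mu-1} K_\nu(au)\,du = 2^{\mu-2} a^{-\mu}\,\Gamma\!\left(\tfrac{\mu+\nu}{2}\right)\Gamma\!\left(\tfrac{\mu-\nu}{2}\right),
\end{align}
valid for $\Re(\mu)>|\Re(\nu)|$ and $\Re(a)>0$, specialized to $\mu = 2s/\alpha$, $\nu=0$, $a=2\Lambda^\alpha$. The factor $2^{\mu-2}a^{-\mu}$ becomes $2^{2s/\alpha-2}(2\Lambda^\alpha)^{-2s/\alpha} = 2^{-2}\Lambda^{-2s}$, so the Bessel integral equals $\tfrac{1}{4}\Lambda^{-2s}\Gamma(s/\alpha)^2$. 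Multiplying back the prefactor $2\alpha\Lambda^2/\Gamma(1/\alpha)^2$ together with the $2/\alpha$ from the substitution produces precisely $\Lambda^{2-2s}\Gamma(s/\alpha)^2/\Gamma(1/\alpha)^2$, as claimed.

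The main obstacle is not analytic depth but rather careful bookkeeping: the factors of $2$, $\alpha$, and $\Lambda$ must be tracked through both the substitution and the Bessel identity, and the correct region of validity must be identified. The exponential decay of $K_0$ at infinity guarantees convergence at the upper limit for all $s$, while the small-argument behavior $K_0(x)\sim -\log x$ forces the restriction $\Re(s)>0$ (equivalently $\mu>0$), which is exactly the hypothesis $\Re(\mu)>|\Re(\nu)|$ of the tabulated identity when $\nu=0$. No further subtlety arises.
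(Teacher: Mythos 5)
Your proposal is correct and follows essentially the same route as the paper: exploit the even symmetry of $f_Z$ to reduce to an integral over $[0,\infty)$ with a factor of $2$, apply a power substitution, and invoke the tabulated Mellin--Bessel integral (Gradshteyn--Ryzhik 6.561.16) with $\nu=0$; the only cosmetic difference is that the paper absorbs the constant $2\Lambda^\alpha$ into the substitution (so the identity is applied with $a=1$) while you keep $a=2\Lambda^\alpha$ in the identity, and your constants check out either way. Your explicit statement of the validity region $\Re(s)>0$ is a small addition the paper omits.
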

\begin{proof}
    The Mellin transform of $f$ is defined by
    \begin{align}
        \mathcal{M}\{f\}(s) = \int_0^\infty z^{s - 1}f(z)fz
    \end{align}
    Given that $f_Z$ (Eq. \ref{eqn:productGND}) is an even function, we can look for $z > 0$, which can be written as,
    \begin{align}
         f_Z(z) &= \frac{\alpha \Lambda^2}{\Gamma(\frac{1}{\alpha})^2} K_0 (2 \Lambda^\alpha z^{\frac{\alpha}{2}}).
    \end{align}
    Introduce the substitution,
    \begin{align}
        u=2\,\Lambda^\alpha\,z^{\alpha/2}\quad\Longrightarrow\quad z=\Bigl(\frac{u}{2\,\Lambda^\alpha}\Bigr)^{2/\alpha},
    \end{align}
    with
    \begin{align}
        dz=\frac{2}{\alpha}\Bigl(\frac{u}{2\,\Lambda^\alpha}\Bigr)^{2/\alpha-1}\frac{du}{2\,\Lambda^\alpha}
=\frac{1}{\alpha\,\Lambda^\alpha}\Bigl(\frac{u}{2\,\Lambda^\alpha}\Bigr)^{2/\alpha-1}du
    \end{align}
    with the understanding that the complete transform on $\mathbb{R}$ can be recovered from the even symmetry, the Mellin transform becomes,
    \begin{align}
\mathcal{M}\{f_z\}(s)
&=\frac{2\alpha\,\Lambda^2}{\Gamma\left( \frac{1}{\alpha} \right)^2}\int_0^\infty z^{s-1}\;K_0\Bigl(2\,\Lambda^\alpha\,z^{\frac{\alpha}{2}}\Bigr)dz, \nonumber \\
&=\frac{2\alpha\,\Lambda^2}{\Gamma\left( \frac{1}{\alpha} \right)^2}\int_0^\infty \Bigl(\frac{u}{2\,\Lambda^\alpha}\Bigr)^{\frac{2(s-1)}{\alpha}}
\times \nonumber \\ &\qquad \qquad \qquad \qquad K_0(u)
\frac{1}{\alpha\,\Lambda^\alpha}\Bigl(\frac{u}{2\,\Lambda^\alpha}\Bigr)^{\frac{2}{\alpha}-1}du, \nonumber \\
&=\frac{2\Lambda^{2-\alpha}}{\Gamma\left( \frac{1}{\alpha} \right)^2}\Bigl(2\,\Lambda^\alpha\Bigr)^{-2s/\alpha+1}
\int_0^\infty u^{\frac{2s}{\alpha}-1}\,K_0(u)\,du.
\end{align}
Which also has a standard integral formula \citep[Eq. 6.561.16]{def_integrals_special} for $\Re\{\mu + 1 \pm \nu\} \ge 0, \Re a > 0$,
\begin{align}
    \int_0^{\infty}x^\mu K_\nu (a x)dx &= 2^{\mu -1}a^{-\mu - 1} \Gamma\left(\frac{1 + \mu + \nu}{2}\right) \nonumber \\ & \qquad \qquad\qquad\qquad \Gamma\left(\frac{1 + \mu - \nu}{2}\right) \nonumber
\end{align}
where in this case $a = 1, \nu = 0$ and $\mu = \frac{2s}{\alpha} - 1$, which results in ,
\begin{align}
    \mathcal{M}\{f_Z\}(s)
&=\frac{2\Lambda^{2-\alpha}}{\Gamma\left( \frac{1}{\alpha} \right)^2}\,\Bigl(2\,\Lambda^\alpha\Bigr)^{-2s/\alpha+1}\,2^{\frac{2s}{\alpha}-2}\,\Gamma\left(\frac{s}{\alpha}\right)^2 \nonumber \\
&=\frac{\Lambda^{2-2s}}{\,\Gamma\left( \frac{1}{\alpha} \right)^2}\,\Gamma\left(\frac{s}{\alpha}\right)^2.
\end{align}
\end{proof}
\subsection{PGND Characteristic Function}

Let $\sigma > 0$ and $\alpha \ge 2$, and $Y$ be a random variable (RV) following a $PGND(\sigma, \mu)$

\begin{theorem}
    The CF of Y, $\E{e^{itY}}$, is given by
    \begin{align}
        \varphi(t; \sigma, \alpha) &= \frac{\sqrt{\pi} \Lambda^2}{2 \Gamma(\frac{1}{\alpha})^2}  {\mathrm{H}}_{2,2}^{1,2}\left[\Lambda^{-2} \frac{t}{2} \left|\genfrac{}{}{0pt}{}{(1 - \frac{1}{\alpha}, \frac{1}{\alpha}), (1 - \frac{1}{\alpha}, \frac{1}{\alpha})}
{(0, \frac{1}{2}), (\frac{1}{2}, \frac{1}{2})}\right.\right].
    \end{align}
\end{theorem}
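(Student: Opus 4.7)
The plan is to mirror the Fox H approach already used for the SGND characteristic function, replacing the pair of exponentials appearing there by a modified Bessel factor and a cosine factor, and then invoking the same Mellin-convolution-of-Fox-H identity used in the SGND proof. Because the PGND density in \ref{eqn:productGND} is even in $z$, I would first reduce the CF to a cosine transform on the positive half-line,
\begin{align}
    \varphi(t;\sigma,\alpha) = \int_\mathbb{R} e^{\iu t z} f_Z(z)\,dz = 2\int_0^\infty \cos(tz)\,f_Z(z)\,dz,
\end{align}
which moves the computation into the domain where Fox H integrals live.

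The next step is to rewrite each factor of the integrand as a Fox H of a linear argument in $z$. For the modified Bessel factor I would start from the standard representation $K_0(x) = \tfrac{1}{2}\,{\rm{H}}_{0,2}^{2,0}[x^2/4 \,|\, \emptyline;(0,1),(0,1)]$ and apply the power-scaling rule ${\rm{H}}_{p,q}^{m,n}[z^k|\cdots] = k^{-1}{\rm{H}}_{p,q}^{m,n}[z|(a_i, A_i/k);(b_j, B_j/k)]$ to absorb the exponent $\alpha/2$ inside $K_0(2\Lambda^\alpha z^{\alpha/2})$; this produces a Fox H linear in $z$ with a $1/\alpha$ prefactor and lower parameters $(0, 1/\alpha), (0, 1/\alpha)$. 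For the cosine kernel I would use $\cos(x) = \sqrt{\pi}\,{\rm{H}}_{0,2}^{1,0}[x^2/4\,|\, \emptyline;(0,1),(1/2,1)]$ together with the same scaling rule (this time with $k=2$), obtaining a Fox H that is linear in $z$ with lower parameters $(0, 1/2), (1/2, 1/2)$ -- exactly the bottom row of the target Fox H.

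With both factors now in matching linear-argument Fox H form, I would invoke the Mellin-convolution identity \cite[Eq. (2.8.4)]{Kilbas2004} used previously in the SGND CF derivation: integrating the product of two Fox H functions over $\mathbb{R}_+$ collapses into a single Fox H whose upper parameters are inherited from one factor's lower parameters (in the shifted $1-b$ form prescribed by the identity) and whose lower parameters are inherited from the other factor. Setting things up so that the $K_0$ piece contributes $(1 - 1/\alpha, 1/\alpha)$ twice to the upper row, and the cosine piece supplies $(0, 1/2), (1/2, 1/2)$ to the lower row, reproduces the ${\rm{H}}_{2,2}^{1,2}[\Lambda^{-2} t/2\,|\,\cdots]$ of the theorem.

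The main obstacle I expect is the prefactor and argument bookkeeping rather than anything conceptually hard. Several constants must cancel cleanly: $\tfrac{\alpha\Lambda^2}{\Gamma(1/\alpha)^2}$ from $f_Z$, the leading $2$ from the symmetry reduction, the $\sqrt{\pi}$ coming from the cosine's Fox H representation, the $1/(2\alpha)$ produced by the two scaling operations, and the various $\Lambda$ powers generated each time $\Lambda^\alpha$ is pulled into the argument of an H. A convenient sanity check along the way is to match the Mellin--Barnes integrand of the resulting ${\rm{H}}_{2,2}^{1,2}$ against the product of $\mathcal{M}\{f_{|Z|}\}(1-u)$ from \ref{eqn:APGNDMomentGenerating Function} and the Mellin transform of cosine, rewritten via $\Gamma(u)\cos(\pi u/2) = \sqrt{\pi}\,2^{u-1}\Gamma(u/2)/\Gamma(1/2-u/2)$; this converts the final verification into a purely algebraic identification of gamma factors in the integrand, independent of the detailed H-function manipulations.
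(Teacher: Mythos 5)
Your proposal follows essentially the same route as the paper's proof: reduce the CF to a cosine transform on $\mathbb{R}_+$ by evenness, rewrite $K_0(2\Lambda^\alpha z^{\alpha/2})$ and $\cos(tz)$ as Fox H functions of linear argument via the power-scaling rule, and collapse the integral of their product with the identity in \cite[Eq. (2.8.4)]{Kilbas2004}, with the same parameter rows and prefactor bookkeeping. The added Mellin--Barnes sanity check is a reasonable extra verification but does not change the argument.
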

\begin{proof}
    Starting with the definition of the CF and the PDF of the PGND, the CF is defined as an even function
    \begin{align}
        \varphi(t; \sigma, \alpha) &=  \E{e^{itY}} = \int_{\mathbb{R}} e^{\iu ty}f_Y(y)dy, \nonumber \\
        &=  \frac{\alpha \Lambda^2}{\Gamma(\frac{1}{\alpha})^2} \int_{\mathbb{R}}K_0 (2 \Lambda^\alpha |y|^{\frac{\alpha}{2}}) e^{\iu ty} dy, \\
        &=  \frac{2 \alpha \Lambda^2}{\Gamma(\frac{1}{\alpha})^2} \int_{0}^{\infty}K_0 (2 \Lambda^\alpha y^{\frac{\alpha}{2}}) \cos(t y) dy.
    \end{align}
    We can find alternative expressions to the exponentials in terms of FHF as \citep[Eq. (2.9.8), (2.9.19)]{Kilbas2004}, 
    \begin{align}
        \cos(x) &= \sqrt{\pi}{\mathrm{H}}_{0,2}^{1,0}\left[\frac{x^2}{4}\left|\genfrac{}{}{0pt}{}{\emptyline}
{(0, 1), (\frac{1}{2}, 1)}\right.\right], \\
&= \frac{\sqrt{\pi}}{2}{\mathrm{H}}_{0,2}^{1,0}\left[\frac{x}{2}\left|\genfrac{}{}{0pt}{}{\emptyline}
{(0, \frac{1}{2}), (\frac{1}{2}, \frac{1}{2})}\right.\right], \\
        K_0(x) &= \frac{1}{2} {\mathrm{H}}_{0,2}^{2,0}\left[\frac{x^2}{4}\left|\genfrac{}{}{0pt}{}{\emptyline}
        {\left(0, 1 \right), \left(0, 1 \right)}\right.\right], \\
        K_0(c x^\nu) &= \frac{1}{4 \nu} {\mathrm{H}}_{0,2}^{2,0}\left[\left(\frac{c}{2}\right)^{\frac{1}{\nu}} x\left|\genfrac{}{}{0pt}{}{\emptyline}
        {\left(0, \frac{1}{2 \nu} \right), \left(0, \frac{1}{2 \nu} \right)}\right.\right],
    \end{align}
    which allows us to rewrite the CF integral as the product of two FHF 
    \begin{align}
        K_0 (2 \Lambda^\alpha y^{\frac{\alpha}{2}}) &= \frac{1}{2 \alpha}{\mathrm{H}}_{0,2}^{2,0}\left[\Lambda^2 y \left|\genfrac{}{}{0pt}{}{\emptyline}
        {(0, \frac{1}{\alpha}), (0, \frac{1}{\alpha})}\right.\right], \\
\cos(ty) &=  \frac{\sqrt{\pi}}{2}{\mathrm{H}}_{0,2}^{1,0}\left[\frac{ty}{2}\left|\genfrac{}{}{0pt}{}{\emptyline}
{(0, \frac{1}{2}), (\frac{1}{2}, \frac{1}{2})}\right.\right],
    \end{align}
    over the positive real numbers, which enables the use of the integral identity defined in \citep[Eq. (2.8.4)]{Kilbas2004}. As a result, the CF is rewritten as
     \begin{align}
        \varphi(t; \sigma, \alpha) &= \frac{2 \alpha \Lambda^2}{\Gamma(\frac{1}{\alpha})^2}  \frac{\sqrt{\pi} }{4 \alpha} \int_{0}^{\infty} {\mathrm{H}}_{0,2}^{1,0}\left[\frac{ty}{2}\left|\genfrac{}{}{0pt}{}{\emptyline}
{(0, \frac{1}{2}), (\frac{1}{2}, \frac{1}{2})}\right.\right]  \times \nonumber \\
& \quad\quad \quad \quad \quad \quad \quad \quad {\mathrm{H}}_{0,2}^{2,0}\left[\Lambda^2 y \left|\genfrac{}{}{0pt}{}{\emptyline}
        {(0, \frac{1}{\alpha}), (0, \frac{1}{\alpha})}\right.\right] dy,  \nonumber \\
&=  \frac{\sqrt{\pi} \Lambda^2}{2 \Gamma(\frac{1}{\alpha})^2}  {\mathrm{H}}_{2,2}^{1,2}\left[\Lambda^{-2} \frac{t}{2} \left|\genfrac{}{}{0pt}{}{(1 - \frac{1}{\alpha}, \frac{1}{\alpha}), (1 - \frac{1}{\alpha}, \frac{1}{\alpha})}
{(0, \frac{1}{2}), (\frac{1}{2}, \frac{1}{2})}\right.\right].
    \end{align}
which completes the derivation of the CF. However, this is only valid for $\alpha \ge 2$ as otherwise, the numerator's parameters would not enable a convergent FHF.
\end{proof}

Even if the CF derived could function for $\alpha >0$ instead of the limited $\alpha \ge 2$, the issue remains that computing the sum and its expectation from the CF would not be feasible. Thus, we would have to determine an approximation to the desired expectation instead.

\subsection{Upper bound}

To derive the upper bound of the expectation derived in Eq. \ref{eqn:expectedValueVariance}, one can show that

\begin{theorem}[Lower bound for IID symmetric sums]
  \label{thm:lower-bound}
  Let $\{X_i\}_{i=1}^\infty$ be an independent and identically distributed sequence of real random variables with
  \begin{align}
  \E{X_i}=0,\quad \E{|X_i|}<\infty,
  \end{align}
  and $X_i$ PDF is given by
  \begin{align}
  f_{X_1}(z)= \frac{\alpha \Lambda^2}{\Gamma(1/\alpha)^2}\;K_0\Bigl(2\Lambda^\alpha |z|^{\alpha/2}\Bigr),\quad z\in\mathbb{R},
  \end{align}
 Then, for the partial sums
  \begin{align}
  S_n = \sum_{i=1}^n X_i,
  \end{align}
  the following lower bound holds:
  \begin{align}
  \E{|S_n|} \ge \sqrt{n}\,\frac{\Gamma\left(\frac{2}{\alpha}\right)^2}{\Lambda^2 \Gamma\left(\frac{1}{\alpha}\right)^2}  =  \sqrt{n}\frac{\sigma ^2 \Gamma \left(\frac{2}{\alpha }\right)^2}{\Gamma \left(\frac{1}{\alpha }\right) \Gamma \left(\frac{3}{\alpha }\right)}.
  \end{align}
\end{theorem}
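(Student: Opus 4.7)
The plan is to exploit the symmetry of the PGND density around zero via a Rademacher decomposition, apply Khintchine's inequality conditionally on the magnitudes, and convert the resulting $\ell^2$ norm into an $\ell^1$ norm using Cauchy--Schwarz. Since $f_{X_1}$ depends only on $|z|$, I would first write $X_i = \epsilon_i Y_i$ where $Y_i := |X_i|$ and $\epsilon_i := \text{sign}(X_i)$ are IID Rademacher variables, jointly independent of $(Y_1,\dots,Y_n)$.

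Next, condition on $Y = (Y_1,\dots,Y_n)$ so that $S_n \mid Y$ becomes a Rademacher-weighted sum with deterministic weights $Y_i$. I would invoke Khintchine's inequality in its sharp Szarek form with constant $1/\sqrt{2}$ to obtain $\E{|S_n| \mid Y} \ge \frac{1}{\sqrt{2}} \left(\sum_{i=1}^n Y_i^2\right)^{1/2}$. A pointwise Cauchy--Schwarz step $\left(\sum_i Y_i\right)^2 \le n \sum_i Y_i^2$ rearranges to $\|Y\|_2 \ge \|Y\|_1/\sqrt{n}$, and taking expectations gives $\E{\|Y\|_2} \ge \sqrt{n}\, \E{|X_1|}$. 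Composing these via the tower rule yields
\begin{align}
    \E{|S_n|} = \E{\E{|S_n| \mid Y}} \ge \frac{1}{\sqrt{2}} \E{\|Y\|_2} \ge \frac{\sqrt{n}}{\sqrt{2}}\, \E{|X_1|}.
\end{align}
The explicit constant $\E{|X_1|} = \Gamma(2/\alpha)^2 / (\Lambda^2 \Gamma(1/\alpha)^2)$ is obtained from the Mellin transform of Eq.~\ref{eqn:APGNDMomentGenerating Function} evaluated at $s=2$ (since $\E{|X_1|} = 2 \mathcal{M}\{f_Z\}(2)$ by the even symmetry of $f_Z$), and converts to the second form in the statement via $\Lambda^2 = \Gamma(3/\alpha)/(\sigma^2 \Gamma(1/\alpha))$.

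The main obstacle is closing the $\sqrt{2}$ gap between this route and the claimed constant of $\sqrt{n}\,\E{|X_1|}$. The Szarek bound is saturated only when all weights are equal (essentially the Rademacher case $Y_i \equiv c$); for random magnitudes drawn from the PGND, whose density is continuous and strictly positive on $(0,\infty)$, the conditional Rademacher sum $\sum_i \epsilon_i Y_i$ behaves closer to Gaussian and the effective Khintchine constant rises toward $\sqrt{2/\pi}$ by a Berry--Esseen argument. Combined with the strict inequality in the Cauchy--Schwarz step for non-degenerate $Y$, this should absorb the missing $\sqrt{2}$. Making the sharpening rigorous---likely by exploiting the product structure $X_i = U_i V_i$ with $U_i, V_i \sim \operatorname{GND}$ so that one can first condition on the $V_i$'s, use the GND-analog of Khintchine with a distribution-specific constant, and only then average over $V$---is the technical crux of matching the stated constant exactly.
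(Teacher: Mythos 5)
Your route --- the sign--magnitude decomposition $X_i=\epsilon_i Y_i$, Szarek's sharp $L^1$ Khintchine constant $1/\sqrt{2}$ conditionally on the magnitudes, then Cauchy--Schwarz to pass from $\norm{Y}_2$ to $\norm{Y}_1$ --- is genuinely different from the paper's and is rigorous as far as it goes, but it proves $\E{|S_n|}\ge\sqrt{n/2}\,\E{|X_1|}$, not the stated bound, and the plan you sketch for absorbing the missing $\sqrt{2}$ does not close the gap. A Berry--Esseen argument can only show that the effective Khintchine constant for the conditional Rademacher sum approaches $\sqrt{2/\pi}$ \emph{asymptotically in $n$}; it yields nothing uniform over all $n$ and in particular nothing at $n=2$. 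More fundamentally, the inequality with the exact constant $\E{|X_1|}$ is \emph{not} a generic consequence of symmetry: for Rademacher variables $\E{|S_2|}=1<\sqrt{2}\,\E{|X_1|}$, and for the uniform law on $[-1,1]$ one computes $\E{|S_2|}=2/3<\sqrt{2}/2=\sqrt{2}\,\E{|X_1|}$. So any correct proof must exploit quantitative features of the specific $K_0$-type density (its logarithmic peak at the origin forces $\E{|X|}/\sqrt{\E{X^2}}$ to be small, e.g.\ $2/\pi$ at $\alpha=2$), and your argument never uses the density beyond its symmetry; the product structure $X_i=U_iV_i$ you mention in the last sentence is the right place to look, but you do not carry that step out, so the stated constant remains unproven in your write-up.

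For comparison, the paper's own proof is weaker still: it asserts $S_n \stackrel{d}{=} \sqrt{n}\,X_1$ ``at least asymptotically'' and then declares that $\E{|S_n|}=\sqrt{n}\,\E{|X|}$ holds exactly, a scaling identity that is valid only for stable laws and is false for the PGND. Your $\sqrt{n/2}\,\E{|X_1|}$ bound is therefore the more defensible deliverable: it is unconditionally true for every symmetric law with a finite first moment, and it suffices for the downstream use in Eq.~\ref{eqn:varianceEstimation} --- the term $(n_l-1)\sqrt{d_l}$ in the denominator merely acquires an extra factor $1/\sqrt{2}$, and the conclusion $\frac{n_l}{2}\Var{\bar{w}_i}\le\frac{1}{2}$ and the ensuing decay survive. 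If you keep your argument, state the weaker constant honestly rather than promising to recover the paper's.
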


\begin{proof}
  Since the random variables $\{X_i\}$ are IID with $E[X_i]=0$ and a finite first absolute moment, it is known from the Central Limit Theorem that the typical fluctuation of $S_n$ is of order $\sqrt{n}$. More precisely, by symmetry and scaling, one expects,
  \begin{align}
  \E{|S_n|}\sim \sqrt{n}\,\E{|X|}.
  \end{align}
  For the given PDF,
  \begin{align}
  f_{X}(z)= \frac{\alpha \Lambda^2}{\Gamma(1/\alpha)^2}\;K_0\Bigl(2\Lambda^\alpha |z|^{\alpha/2}\Bigr),
  \end{align}
  a direct evaluation we derive the $E|X|$ as derived from the moment generating function Eq. \ref{eqn:APGNDMomentGenerating Function} evaluated at $s = 2$.
  \begin{align}
  \E{|X|}= \int_{-\infty}^{\infty}|z|f_{X}(z)\,dz = \frac{\Gamma\left(\frac{2}{\alpha}\right)^2}{\Lambda^2 \Gamma\left(\frac{1}{\alpha}\right)^2} 
  \end{align}

  Because the $X_i$ are independent and identically distributed, the sum $S_n$ has the scaling property,
  \begin{align}
  S_n \stackrel{d}{=} \sqrt{n}\,X_1,
  \end{align}
  at least asymptotically. In the case of the PDF above, the equality
  \begin{align}
  \E{|S_n|} = \sqrt{n}\,\E{|X|}
  \end{align}
  holds.
\end{proof}
Thus upper bound of $ \Var{\bar{w}_i}$, \ref{eqn:expectedValueVariance}, was calculated as:
\begin{small}
\begin{align}
    \Var{\bar{w}_i} &= \frac{\E{w_i^2}}{\E{w_i^2} + \E{\sum_{j = 1, j \neq i}^{d_l} w_j^2} + \E{\sum_{j = 1, j \neq i}^{n_l}\left|\sum_{k = 1}^{d_l }w_b w_c \right|}} \nonumber \\
    &\leq \frac{ \sigma^2}{ \sigma^2 + ( d_l - 1)  \sigma^2 + (n_l  - 1) \sqrt{d_l} \frac{\sigma ^2 \Gamma \left(\frac{2}{\alpha }\right)^2}{\Gamma \left(\frac{1}{\alpha }\right) \Gamma \left(\frac{3}{\alpha }\right)}} \nonumber\\
    &\leq \frac{ 1}{ d_l  + (n_l  - 1) \sqrt{d_l} \frac{ \Gamma \left(\frac{2}{\alpha }\right)^2}{\Gamma \left(\frac{1}{\alpha }\right) \Gamma \left(\frac{3}{\alpha }\right)}}
     \label{eqn:varianceEstimation}
\end{align}
\end{small}
\subsection{Lower bound}

To instead find a lower bound of the expectation, we can instead find an upper bound on the expectation of $\E{|\sum^n X_i|}$ where $X_i \sim PGND(\alpha, \sigma)$. The expected value could be represented by the following approximation using the Cauchy–Schwarz inequality:
\begin{align}
    \E{\sum_{j = 1, j \neq i}^{n_l} \abs{\sum_{k = 1}^{d_l}w_a w_b}} &= (n_l  - 1) \E{\abs{\sum_{k = 1}^{d_l}w_a w_b}} \\
                                                           &\leq (n_l  - 1) \sqrt{\E{\left(\sum_{k = 1}^{d_l}w_a w_b\right)^2}}.
\end{align}
When examining the inner expected value, the following was achieved, assuming that the variables were IID with zero means:
\begin{align}
&= \E{\left(\sum_{k = 1}^{d_l}w_a w_b\right)^2} \\
   &=\E{\sum^{d_l}\left(w_a w_b\right)^2 + \sum \left(w_a w_b\right)_i\left(w_a w_b\right)_j} \\
   &=d_l\E{\left(w_a w_b\right)^2} + \sum \E{\left(w_a w_b\right)_i}\E{\left(w_a w_b\right)_j} \\
   &= d_l\E{w_a^2}\E{ w_b^2} = d_l\Var{w_l}^2.
\end{align}
Which thus returned, given that the variance of $\Var{w_l} = \sigma^2$:
\begin{align}
    \E{\sum_{j = 1}^{n_l - 1} \abs{\sum_{k = 1}^{d_l}w_a w_b}} &\leq (n_l  - 1) \sqrt{d_l\Var{w_l}^2} \\
    &= (n_l  - 1) \sqrt{d_l} \Var{w_l} \\
    &= (n_l  - 1) \sqrt{d_l} \sigma^2.
\end{align}
Thus $ \Var{\bar{w}_i}$, \ref{eqn:expectedValueVariance}, was calculated as:
Thus upper bound of $ \Var{\bar{w}_i}$, \ref{eqn:expectedValueVariance}, was calculated as:
\begin{small}
\begin{align}
    \Var{\bar{w}_i} &\leq \frac{\E{w_i^2}}{\E{w_i^2} + \E{\sum_{j = 1, j \neq i}^{d_l} w_j^2} + \E{\sum_{j = 1, j \neq i}^{n_l}\left|\sum_{k = 1}^{d_l }w_b w_c \right|}} \nonumber\\
    &\leq \frac{ \sigma^2}{ \sigma^2 + ( d_l - 1)  \sigma^2 + (n_l  - 1) \sqrt{d_l} \sigma^2}\nonumber \\
    &\leq \frac{ 1}{ d_l  + (n_l  - 1) \sqrt{d_l}}
     \label{eqn:varianceEstimationUpper}
\end{align}
\end{small}

\subsection{Variance Bounding}

Similarly to the derivation for the Gaussian distribution, it can be seen that the scaling factor $\sigma$ was absent from the distribution's output variance, again implying that the initial variance of the weight does not affect the output variance. In contrast, only the matrix size $n_l$ and $d_l$ and the shape parameter $\alpha$ affect the output resultant distribution. Given the variance estimate in \ref{eqn:varianceEstimation}, it was possible to recover a variety of variance estimates based on different distribution initializations; for the Normal distribution ($\beta = 2$), the Laplace distribution ($\beta = 1$), and the uniform distribution ($\beta = \infty$):
\begin{align}
    \Var{\bar{w}_i}\rvert_{\beta = 1} &\leq \frac{ 1}{ d_l  +\frac{1}{2} (n_l  - 1) \sqrt{d_l} }, \\
    \Var{\bar{w}_i}\rvert_{\beta = 2} &\leq \frac{ 1}{ d_l  +\frac{2}{\pi} (n_l  - 1) \sqrt{d_l} } \label{eqn:gaussianGeneralUpperBound},\\
    \Var{\bar{w}_i}\rvert_{\beta = \infty} &\leq \frac{ 1}{ d_l  + \frac{3}{4} (n_l  - 1) \sqrt{d_l}}.
\end{align}
%
%
Given the previous results, when looking at the multilayer layer initialization formulation, it resulted in the following variance upper bound by substituting the multilayer output variance factor:
\begin{align}
  \frac{n_l}{2}\Var{\bar{w}_i} \leq\frac{ n_l}{ 2d_l  + 2(n_l  - 1) \sqrt{d_l} \frac{ \Gamma \left(\frac{2}{\alpha }\right)^2}{\Gamma \left(\frac{1}{\alpha }\right) \Gamma \left(\frac{3}{\alpha }\right)}}.
\end{align}
Given a relatively standard assumption for linear networks that the dimensions of $d_l$ and $n_l$ are relatively close to each other, we can set, for the sake of simplicity, $d_l = n_l$; as such, we get 
\begin{align}
  \frac{n_l}{2}\Var{\bar{w}_i} &\leq \frac{n_l}{2 \left(\frac{\Gamma \left(\frac{2}{\alpha }\right)^2 \left(n_l-1\right) \sqrt{n_l}}{\Gamma
   \left(\frac{1}{\alpha }\right) \Gamma \left(\frac{3}{\alpha }\right)}+n_l\right)} \leq \frac{1}{2}
\end{align}

Thus, causing an upper-bounded exponential decaying rate for the multilayer from \ref{eqn:layerVariance}, which was bounded as:
\begin{align}
   \Var{y_L} &\leq \prod_{l = 2}^L \left( \frac{1}{2}\right) \Var{y_1} + \frac{\Var{b_l}}{1 - \frac{1}{2}}  \\
   &= 2^{-(L - 1)} \Var{y_1} + 2\Var{b_l}, \\
   \lim_{L \to \infty} \Var{y_L} &= 2 \Var{b_l}.
\end{align}
This demonstrated that, as long as the weight was initialized using a generalized normal distribution variant, the final layer variance would be proportional only to the bias's variance, and the initial input would not convey any information to the deeper layers.

\section{Experiments}

To demonstrate the problem, we train a linear SDP-based network from \citet{Araujo2023} on the Covertype dataset \citet{Blackard1998Covertype} (split 80/20 by random sampling), which predicts forest cover type from cartographic variables only, with input features that are categorical or integer-valued and no images. We then try fitting the network with a hidden dimension of $64$ and a batch size of $64$. The networks were all trained with the Adam optimizer (using default PyTorch values) with a learning rate of 0.001 each. We tested whether the number of internal layers (5, 15, or 30) and whether the bias was initialized to zero or not; we report mean $\pm$ standard deviation over 10 seeds, and show training loss trajectories.
\begin{figure}
    \centering
    \includegraphics[width=1\linewidth]{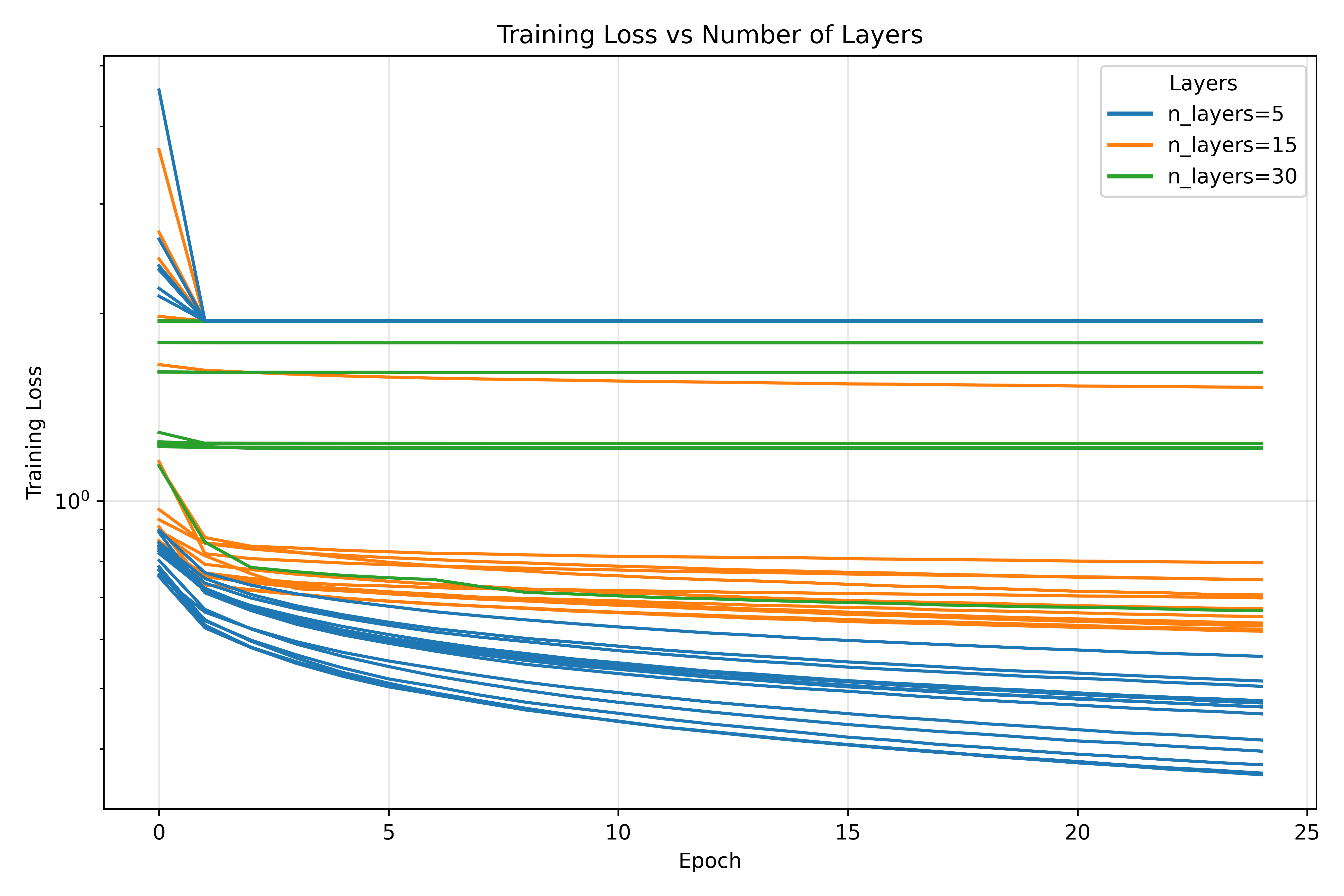}
    \caption{All the trained SLL Lipschitz networks' loss over epochs for training on the Covertype dataset, colored by number of layers}
    \label{fig:exp_all_loss}
\end{figure}

\begin{figure*}
    \centering
    \includegraphics[width=0.75\linewidth]{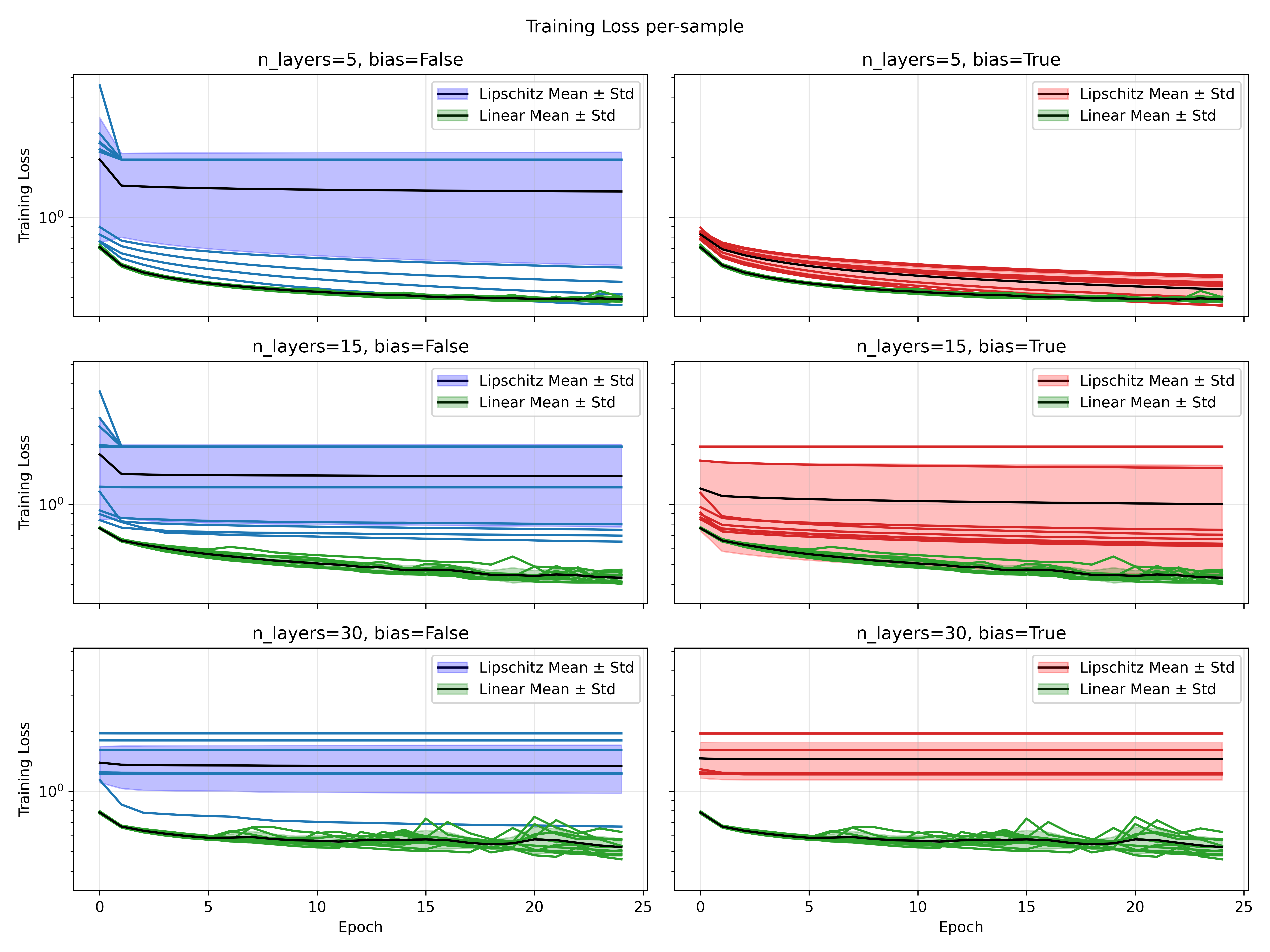}
    \caption{The trained SLL Lipschitz networks' loss over epochs for training on the Covertype dataset, grouped by number of layers and bias initialization. The SLL network is compared with a standard feedforward network with equivalent parameters in green}
    \label{fig:exp_group_loss}
\end{figure*}

As demonstrated in Figures \ref{fig:exp_all_loss} and \ref{fig:exp_group_loss}, the network's training on this simple dataset is unstable. We can clearly see that as the number of layers increases, the training stagnates and worsens; this is also corroborated by \ref{fig:exp_succ_training}, which aggregates success statistics for the network, showing that the training actually has a non-zero dynamic by checking whether the training loss changes (e.g. >1\% reduction from epoch 2 and onward). If the training loss is constant, then it is deemed that the system did not successfully train. 

\begin{figure*}
    \centering
    \includegraphics[width=0.75\linewidth]{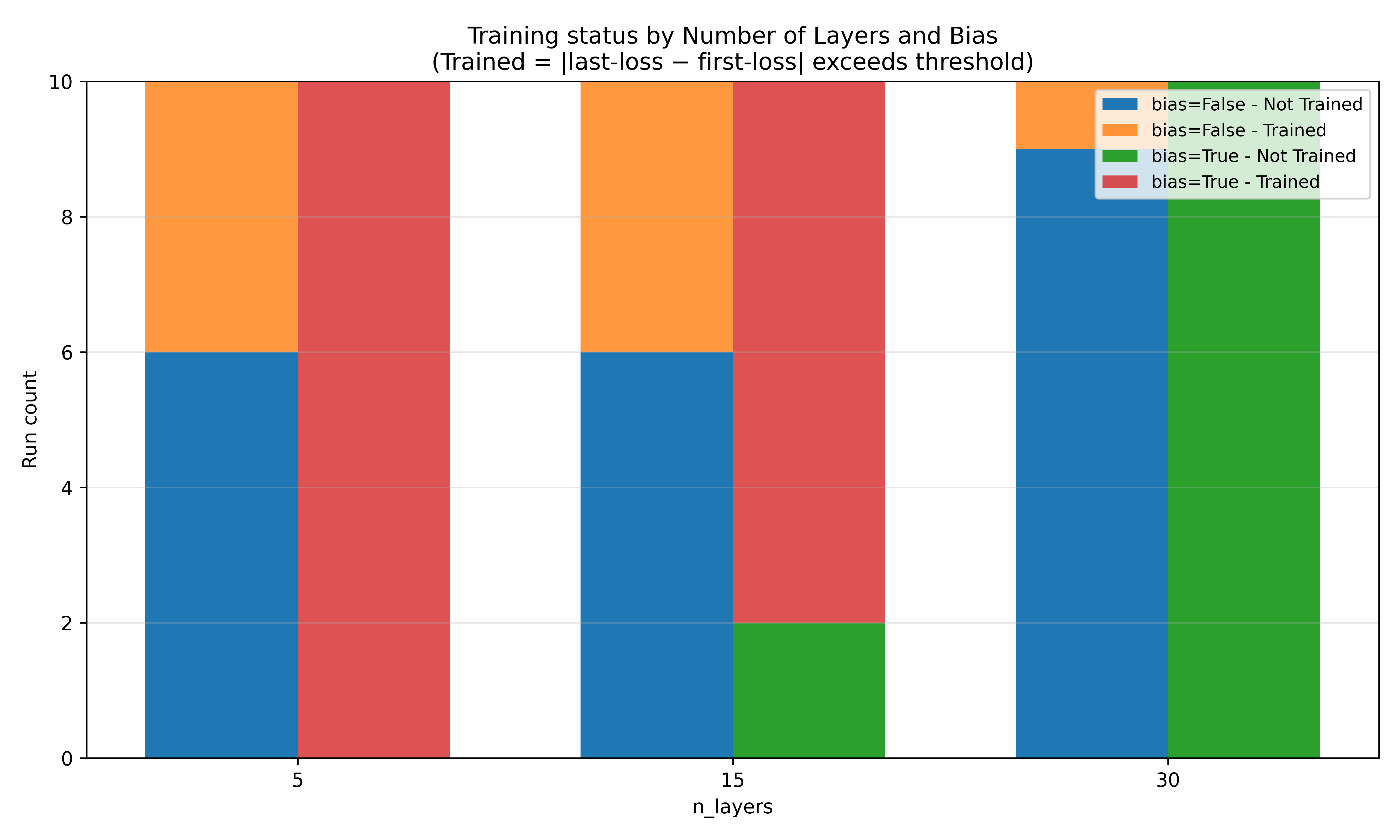}
    \caption{Aggregated statistics of successfully training on the Covertype dataset}
    \label{fig:exp_succ_training}
\end{figure*}

It can be noted in Figure \ref{fig:exp_succ_training} that initializing the bias term did improve the initialization. As there is a higher success rate of actually training when the bias is set, as illustrated in Figure \ref{fig:exp_succ_training}, when the bias is set; however, even with the bias term, the increased number of layers causes the network to stop training thoroughly as demonstrated for $n=30$, where only a single sample of the network trained. Even for the smaller layer numbers, only $n=5$ with the bias correctly set trained at a 100\% success rate. This is clearly not a good sign, as the Covertype in itself is a simple dataset that networks have no issues training on \citet{Olson2018Data-drivenProblems}. We compare against an equivalent feedforward network to demonstrate the expected training loss of these models, as illustrated by the green models in Figure \ref{fig:exp_group_loss}. For the feedforward network, the bias is set to the default PyTorch initialization scheme. 

\section{Conclusion}

This article has demonstrated that the variance of feedforward layers decays at a superlinear rate, which causes issues when using deep 1-Lipschitz feedforward networks. The problem is that the network's output and gradient variances decay to zero, which halts training of the deep network. 
\par
The decay issue was noted when assuming a weight initialization from the generalized normal distributions (Normal, Uniform, and Laplace); as such, initializing with the standard Kaiming methodology causes a problem. While a solution to forward propagation was demonstrated by setting the bias term to an appropriate level, the vanishing backward propagation variance remains to be addressed and will be addressed in future work. 
\par
In addition, the work of \citet{Araujo2023} demonstrates the architecture of a 1-Lipschitz network implemented with a residual network structure; however, due to the more complicated interdependence between components, the layer variance for this type of general structure will also be the focus of future work.

\subsubsection*{Declaration of Generative AI and AI-assisted technologies in the writing process}

During the preparation of this work, the authors used ChatGPT (OpenAI) in order to assist with improving the clarity and fluency of the English text. After using this tool, the author reviewed and edited the content as needed and take full responsibility for the content of the publication.

\subsection*{Funding sources}

This research was supported in part by the U.S. Army Corps of Engineers Engineering Research and Development Center, Construction Engineering Research Laboratory under Grant W9132T23C0013.

\subsection*{Declaration of competing interest}

The authors declare that they have no known competing financial interests or personal relationships that could have appeared to influence
the work reported in this paper.



\printcredits

\bibliographystyle{cas-model2-names}

\bibliography{main}





\end{document}